\documentclass{article}


\usepackage[preprint]{neurips_2026}

\usepackage[utf8]{inputenc} 
\usepackage[T1]{fontenc}    
\usepackage{hyperref}       
\usepackage{url}            
\usepackage{booktabs}       
\usepackage{amsfonts}       
\usepackage{nicefrac}       
\usepackage{microtype}      
\usepackage{xcolor}         
\usepackage{amsmath} 
\usepackage{graphicx} 
\usepackage{amssymb}
\usepackage{mathtools}
\usepackage{float}
\usepackage[dvipsnames]{xcolor}
\usepackage{pgfplots}
\usepgfplotslibrary{groupplots}
\pgfplotsset{compat=1.17}
\usepackage{rotating}
\usepackage{multirow}
\usepackage{makecell}

\usepackage{amsmath, amssymb, amsthm}
\newcommand{\tablescale}{0.9} 
\theoremstyle{plain}
\newtheorem{theorem}{Theorem}[section]
\newtheorem{lemma}[theorem]{Lemma}

\theoremstyle{definition}

\theoremstyle{remark}

\usepackage{placeins}

\usepackage[textsize=tiny]{todonotes}

\title{SFO: Learning PDE Solution Operators via Hilbert Spectral Basis}

\author{%
  Noam Koren \\
  Department of Computer Science\\
  Technion - Israel Institute of Technology\\
  \texttt{noam.koren@campus.technion.ac.il} \\
  \And
  Rafael Moschopoulos \\
  Department of Computer Science\\
  Princeton University \\
  \texttt{gm3460@princeton.edu} \\
  \AND
  Kira Radinsky \\
  Department of Computer Science\\
  Technion - Israel Institute of Technology\\
  \texttt{kiraradinsky@gmail.com} \\
  \And
  Elad Hazan \\
  Department of Computer Science\\
  Princeton University \\
  \texttt{elad.hazan@gmail.com}
}

\begin{document}

\maketitle
\begin{abstract}
Neural operators have emerged as a powerful framework for learning PDE solution operators directly from data. However, efficiently capturing the long-range, nonlocal interactions that characterize these operators remains challenging. We introduce Spectral Filtering Operator (SFO), a neural operator that parameterizes integral kernels in the Universal Spectral Basis (USB), a fixed global orthonormal basis derived from the eigenmodes of the Hilbert matrix in spectral filtering theory. We show that, for an important class of PDEs, the discrete Green's kernel function has a spatial Linear Dynamical System (LDS) structure, implying that it admits a compact USB approximation with an $\epsilon$-accuracy error bound. By learning only the spectral coefficients, SFO yields a highly efficient representation. Across six benchmarks spanning reaction-diffusion, fluid dynamics, and electromagnetics, SFO achieves state-of-the-art accuracy, reducing error by up to 40\% relative to strong baselines while using substantially fewer parameters.
\end{abstract}

\section{Introduction}
Many phenomena in physics, engineering, and natural science are governed by partial differential equations (PDEs). Classical numerical solvers (e.g., finite differences \cite{fdm}) are often expensive at high resolution, where very fine discretizations are needed \cite{tang2017global}. This has motivated neural operators: learned models that approximate the PDE solution operator, i.e., a mapping from an input function to the corresponding solution field. Once trained, a neural operator can amortize simulation cost across many queries, enabling fast inference for new PDE instances. Furthermore, unlike traditional approaches that require explicit knowledge of the governing equations, neural operators are data-driven and can learn the solution map directly from observations.



A central challenge in neural operators is efficiently capturing nonlocal interactions (e.g., FNO~\cite{li2020fourier}, MPNN~\cite{mpnn}). 
Many PDE solution operators exhibit long-range dependencies, where the solution at a point depends on distant regions of the domain (e.g., elliptic PDEs and fractional diffusion~\cite{greens_pde}). 
Capturing such behavior requires propagating information globally while maintaining a compact and efficient representation.


In this work, we introduce the \textbf{\textsc{Spectral Filtering Operator (SFO)}}, a neural operator that explicitly learns the underlying kernel, whereas most neural operators (e.g., FNO~\cite{li2020fourier}) parameterize the operator implicitly without explicitly learning the kernel. 
SFO parameterizes the kernel using the Universal Spectral Basis (USB), a fixed, globally supported orthogonal basis given by the eigenvectors of the Hilbert matrix. 
The kernel is represented as an expansion over the leading USB modes, and only the corresponding spectral coefficients are learned. 
The USB was originally proposed for modeling Linear Dynamical Systems (LDS)~\cite{hazan2017spectral}. 
Unlike Fourier bases, which bias toward smooth periodic functions, or learned bases, which may overfit, the USB provides a fixed global representation with rapid spectral decay.

This choice aligns with the structure of PDE solution operators. 
We analyze a representative setting in which the discrete PDE kernel exhibits an LDS structure and show that its USB expansion decays exponentially. 
As a result, an $\epsilon$-accurate approximation requires only logarithmically many leading modes, enabling \textsc{SFO} to capture long-range interactions efficiently with a compact representation.



Importantly, SFO is not limited to this LDS setting: the LDS analysis provides a theoretical explanation for an important class of PDE operators, while the architecture applies generally. Empirically, the same low-mode USB bias is effective across diverse PDE benchmarks beyond the LDS regime.

Across six PDE benchmarks, \textsc{SFO} achieves consistent state-of-the-art accuracy, with up to $40\%$ error reduction while using significantly fewer parameters than prior work. 

The contributions of this work are:
\begin{enumerate}

\item  We introduce a fixed,  spectral parameterization for neural-operator kernels based on the Universal Spectral Basis (USB), enabling compact representations of long-range interactions.

\item  We design \textsc{SFO}, a neural operator that realizes this parameterization.

\item  For an important class of PDEs, we prove LDS structure in the kernel and derive a USB truncation error bound showing that $\epsilon$-accuracy is achievable with $\tilde{\mathcal{O}}\big(\log(1/\epsilon)\big)$ USB modes.

\item We demonstrate consistent state-of-the-art performance over six diverse PDE benchmarks, achieving up to $40\%$ error reduction with significantly fewer parameters than prior work.
\end{enumerate}






An anonymized code repository is available.\footnote{https://anonymous.4open.science/r/STUNO-678F}

\section{Related Work}
\label{related_work}
Neural operator architectures primarily differ in how they represent the solution operator. We categorize approaches into five major classes: Coordinate-based, Fourier-based, Graph-based, Physics-Informed, and Explicit Kernel Representation. Our comparative analysis includes baselines for each class, covering the established architectures in the PDENNEval survey~\cite{pdenneval} (DeepONet, FNO, UNO, MPNN, PINO) as well as the recent kernel-learning method SVD-NO \cite{koren2025svd}, and the sequence-based methods MNO \cite{mno} and Transolver \cite{transolver}.


\textbf{Coordinate-Based.}
DeepONet~\cite{deeponet} and its variants (e.g., F-DeepONet~\cite{fdeeponet}) use a branch--trunk architecture to learn operators as a composition of input and coordinate embeddings, but scales poorly with resolution. 
MNO~\cite{mno} and Transolver~\cite{transolver} model the operator as a sequence, capturing global dependencies at the cost of higher computational complexity. 
We include these methods as representative baselines.

\textbf{Fourier-Based.}
FNO~\cite{li2020fourier} and its variants (e.g., WNO~\cite{tripura2022wavelet}, 
LNO~\cite{cao2303lno}, KNO~\cite{kno}, UNO~\cite{uno}) learn spectral multipliers in the Fourier domain using truncated modes, enabling efficient global interactions but biasing toward low-frequency structure. 
UNO~\cite{uno} extends this with U-Net-style architectures to improve multiscale representation. 
We include FNO and UNO as baselines.


\textbf{Graph-Based.}
For unstructured domains, GNN-based models (e.g., GNO~\cite{gkn}, 
GINO~\cite{gino}, MPNN~\cite{mpnn}) approximate the operator via message passing on graph edges. 
They handle irregular geometries but can struggle with long-range interactions and oversmoothing; we use MPNN as a baseline.

\textbf{Physics-Informed.}
PINO~\cite{pino} incorporates a PDE residual term into the loss, enforcing physical consistency during training.
We include PINO as a baseline.


\textbf{Explicit Kernel Representation.}
These methods parameterize the operator kernel explicitly. SVD-NO~\cite{koren2025svd} learns a low-rank kernel decomposition, which can be expressive but may be more sensitive to data coverage and overfitting; we include it as a comparison.

A parallel line of research introduced the \textbf{Universal Spectral Basis (USB)}, derived from the eigenvectors of the Hilbert matrix, motivated by learning long-range dependencies in LDS \cite{hazan2017spectral}. The USB provides a fixed orthonormal set of globally supported modes with rapid spectral decay, offering an attractive basis for compact spectral representations. Unlike Fourier bases, USB modes are not frequency-ordered and exhibit rapid decay for a broad class of LDS, while remaining fixed and data-independent unlike learned bases.

SFO is an addition to the \emph{explicit kernel representation} operator category: as in kernel-learning methods, it learns the kernel explicitly via a constrained expansion. Specifically, SFO expands the kernel in the USB and learns only the expansion coefficients. Conceptually, SFO adapts USB spectral filtering from sequence modeling to PDE operator learning, yielding a compact global kernel.


\section{Background}
\subsection{Neural Operator}
\label{sec:neural_ops}

\textbf{Operator Learning.}
Let $\mathcal D \subset \mathbb{R}^{d_x}$ be a bounded domain. 
Consider input and output function spaces
\[
\mathcal A := L^2(\mathcal D; \mathbb{R}^{d_a})\text{, } 
\mathcal U := L^2(\mathcal D; \mathbb{R}^{d_u}),
\text{ and a latent function space, }
\mathcal V := L^2(\mathcal D; \mathbb{R}^{d_v}).
\]
An operator $\mathcal G: \mathcal A \to \mathcal U$ maps an input field $a \in \mathcal A$ to an output field $u = \mathcal G(a) \in \mathcal U$ (e.g., the PDE solution). 
The goal is to learn $\mathcal G$ from data $\{(a_j, u_j)\}_{j=1}^N$ with $u_j = \mathcal G(a_j)$~\citep{Neuralop}.

\textbf{Neural Operator Architecture.}
A neural operator parameterizes
\(
\mathcal G_\theta:\mathcal A\to\mathcal U
\). 
Kernel-based operators are typically composed of a lifting map, operator layers, and a projection:
\[
\mathcal G_\theta
=
Q\circ
\bigl[\,\mathrm{layer}_{T-1}(\mathcal{K}_\theta^{T-1})\,\bigr]
\circ \cdots \circ
\bigl[\,\mathrm{layer}_{0}(\mathcal{K}_\theta^{0})\,\bigr]
\circ P,
\label{eq:no_arch}
\]

Here:
\begin{itemize}
\item $P: \mathbb{R}^{d_a} \to \mathbb{R}^{d_v}$ and $Q: \mathbb{R}^{d_v} \to \mathbb{R}^{d_u}$ are pointwise lifting and projection maps.
\item Each layer applies an integral operator $\mathcal K_\theta^{t}: \mathcal V \to \mathcal V$.
\item The operator $\mathcal K_\theta^{t}$ is defined by a kernel $\kappa_\theta^{t}$ as
\[
\bigl[\mathcal K_\theta^{t}(a)v\bigr](x)
=
\int_{\mathcal D}
\kappa_\theta^{t}\!\bigl(x, a(x), x', a(x')\bigr)\, v(x')\, dx'.
\]
\end{itemize}

\subsection{Universal Spectral Basis (USB)}
\label{subsec:sfo_spectral_filtering}
The \emph{Universal Spectral Basis} (USB) of \cite{hazan2017spectral} shows that a long-range structure in LDS can be captured using \textit{spectral filters} derived from the eigenvectors of the Hilbert matrix. 
Subsequent works extend USB spectral filtering to wider classes of nonlinear dynamics \cite{ dogariu2025universal}. 

\textbf{The Hilbert eigenvectors.}
The eigenvectors of the Hilbert matrix $H$, denoted by $\{\phi_l\}_{l=1}^n$, form a fixed, globally supported orthonormal system that exhibits:

1. \textit{Global:} Each eigenvector spans the entire domain.

2. \textit{Fast spectral decay:} The eigenvalues decay rapidly; beyond $l\!\approx\!20$ modes they are numerically negligible \cite{hazan2022introduction}, allowing few modes to capture global structure.

3. \textit{Universality:}
Hilbert filters form an $\ell_2$-complete basis.

\textbf{Discrete USB and its continuous interpretation.}
The USB modes \(\{\phi_\ell\}_{\ell=1}^n\) are the \(\ell_2\)-orthonormal eigenvectors of the Hilbert matrix \(H^{(n)}\).
Let \(\mathcal D=[0,1]\) be discretized by a uniform grid of \(n\) points $\{x_i\}_{i=1}^n$ with spacing $\Delta x=1/n$. Define the associated piecewise-constant functions
\(
\phi_\ell^{(n)}(x):=\phi_\ell[i]
\)
for
$x\in[x_i,x_{i+1}).$
Then \(\{\phi_\ell^{(n)}\}_{\ell=1}^n\) is orthogonal in \(L^2(\mathcal D)\), with
\(
\langle \phi_\ell^{(n)}, \phi_k^{(n)} \rangle = \Delta x\, \delta_{\ell k}.
\)

\section{The Spectral Filtering Operator (SFO)}
\label{sec:SFO}
\subsection{From LDS-Structured PDE Kernels to SFO Kernel}
\label{sec:sfo_architecture}

\textbf{Generic integral layer.} 
Let $\mathcal D\subset\mathbb R$ be a bounded spatial domain. A standard kernel-based
neural operator layer has the form
\begin{equation}
\bigl[\mathcal K_\theta v\bigr](x)
=
\int_{\mathcal D}
\kappa_\theta\!\bigl(x,x'\bigr)\,v(x')\,dx'.
\label{eq:no_kernel}
\end{equation}

\textbf{LDS structure in PDE kernels.}
We focus on the case where the kernel has an LDS\textit{-like} structure. A representative example is a geometrically decaying
kernel
\begin{equation}
\kappa(x,x') = c\,\rho^{|x-x'|},\qquad |\rho|<1.
\label{eq:geom_kernel}
\end{equation}
which exhibits stable LDS behavior along the spatial index on a uniform grid:
the interaction depends only on the offset $|x-x'|$ and decays exponentially
with distance. 

This is where truncated USB spectral filtering yields rapidly decaying expansion coefficients and accurate low-mode approximations (App.~\ref{app:theoretical_analysis}); moreover, it remains effective even near marginal stability ($|\rho|\approx 1$) \cite{hazan2017spectral}.

A representative setting where such geometric decay arises is when the discretized kernel is given by the inverse of a \emph{three-point stencil}
Toeplitz operator. Let
\[
(Kv)_i = b\,v_{i-1}+a\,v_i+b\,v_{i+1}, \qquad a>2|b|.
\]
Then \(K\) is invertible, and the entries of \(K^{-1}\) decay geometrically with the grid-index distance \(|i-j|\). In App.~\ref{app:theoretical_analysis}, we prove this result and show that this class naturally arises from standard local,
shift-invariant discretizations of PDE resolvents, such as diffusion-reaction.

\textbf{SFO Kernel.}
These observations motivate parameterizing the kernel directly in the USB basis.
Let \(\{\phi_\ell\}_{\ell=1}^{\infty}\) denote the Hilbert modes such that
\begin{equation}
\label{eq:sfo_spectral}
\kappa_\theta(x,x')
=
\sum_{\ell=1}^{\infty}\theta_\ell\,\phi_\ell(x-x'),
\qquad (x,x')\in\mathcal D\times\mathcal D .
\end{equation}
Prior USB results imply that the coefficients \(\{\theta_\ell\}\) decay rapidly, so the kernel energy concentrates in low-index modes. This motivates the truncated parameterization
\begin{equation}
\label{eq:sfo_spectral_L}
\kappa_{\theta,L}(x,x')
=
\sum_{\ell=1}^{L}\theta_\ell\,\phi_\ell(x-x').
\end{equation}
Theorem~\ref{thm:main_learnability} then implies that this truncation achieves
\(\epsilon\)-accurate approximation with \(L=\tilde{\mathcal O}(\log(1/\epsilon))\) modes.
Thus, for this important subclass of PDE discretizations, SFO achieves \(\epsilon\)-accurate approximation using only logarithmically many USB modes.

In practice, the modes $\{\phi_l\}$ are defined on a discrete grid. 
Since the integral operator in \eqref{eq:no_kernel} with a shift-invariant kernel corresponds to a convolution, it is implemented as a convolution using FFT.

\subsection{SFO Architecture}
Figure~\ref{fig:sfo} provides an overview of the architecture and a single SFO layer.
\begin{figure}[t]
    \centering
\includegraphics[width=.65\textwidth]{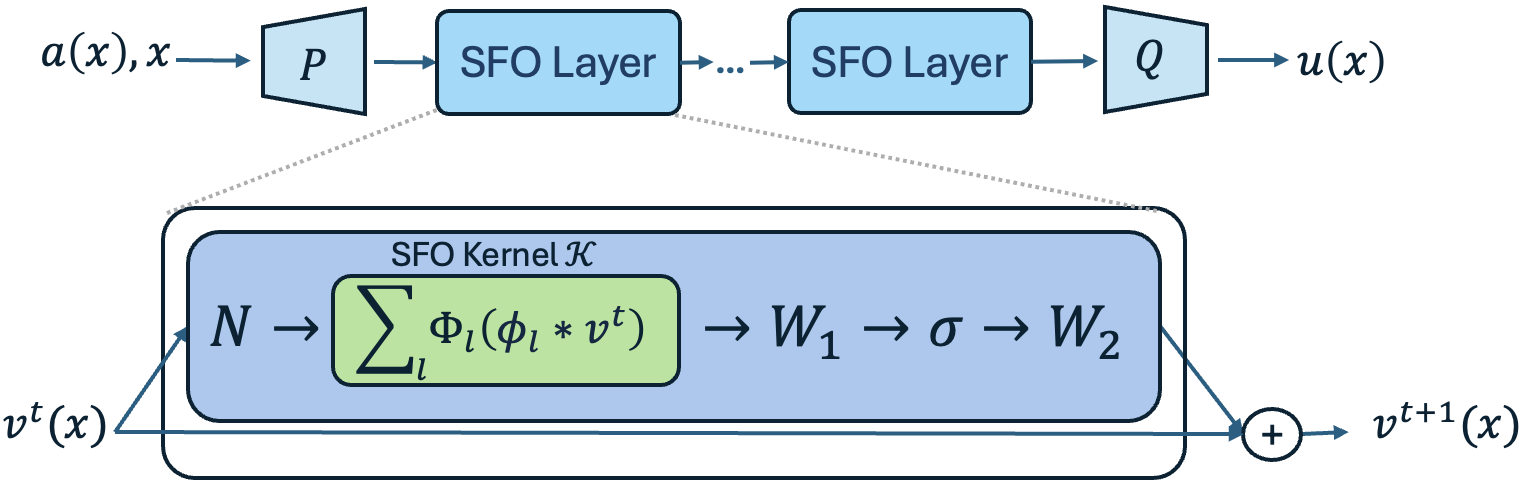}
\caption{SFO architecture. The input field \((a(x), x)\), is lifted by a map \(P\) to a latent representation \(v^0\). The latent field is updated by \(T\) stacked SFO layers, each applying an operator \(\mathcal{K}\). 
A projection \(Q\) maps the last latent state \(v^T\) back to the solution field \(u(x)\).
Within each SFO layer, the current state \(v^t\) is normalized (\(N\)), passed through the operator \(\mathcal{K}\), then through a linear transformation with a nonlinearity, and added back to \(v^t\). Here, \(*\) denotes convolution.
}
    \label{fig:sfo}
\end{figure}

We now instantiate the above approximation inside a kernel-based neural operator. Inserting \eqref{eq:sfo_spectral_L} into the generic integral layer \eqref{eq:no_kernel} gives
\begin{equation}
(\mathcal K_\theta v)(x)
=
\sum_{l=1}^L
\theta_l
\int_{\mathcal D}\phi_l(x-x')\,v(x')\,dx'.
\label{eq:update_continuous_sfo}
\end{equation}

While the LDS analysis motivates the USB parameterization, SFO does not require this assumption. Empirically, the low-mode bias remains effective beyond the LDS setting.


\textbf{Extension to Vector-Valued Fields.}
For vector-valued fields $v:\mathcal D\to\mathbb R^{d}$, the operator uses matrix-valued kernels
\[
\kappa:\mathcal D\times\mathcal D\to\mathbb R^{d\times d}.
\]
In this setting, the $L$ Hilbert modes $\{\phi_l\}_{l=1}^L$ are shared across variables, and the scalar weights $\theta_l \in \mathbb{R}$ are replaced by learned varaible-mixing matrices $\Theta_l \in \mathbb{R}^{d \times d}$,
\begin{equation}
\kappa(x,x')
  = \sum_{l=1}^{L} \Theta_{l}\,\phi_{l}(x-x').
\end{equation}
The corresponding operator is
\begin{equation}
(\mathcal K_\theta v)(x)
=
\sum_{l=1}^L
\Theta_l
\int_{\mathcal D}\phi_l(x-x')\,v(x')\,dx'.
\label{eq:update_continuous_sfo_vector}
\end{equation}

\textbf{Extension to high-dimensional grids.}
For a PDE defined on an $M$-dimensional grid with axes $x_1,\dots,x_M$, the  high-dimensional modes $\{\phi_l(x_1,\dots,x_M)\}_{l=1}^L$ are obtained via a separable \emph{tied-index} tensor product of the 1D modes:
\begin{equation}
\label{eq:tied_index}
\phi_l(x_1,\dots,x_M) = \prod_{m=1}^{M} \phi_l(x_m).
\end{equation}
This yields $L$ globally supported $M$-dimensional modes, where the same index $l$ is shared across all axes.
We ablate the more expressive $L^M$ \emph{multi-index} construction in Section~\ref{sec:ablation}.

\textbf{Convergence as $L \to \infty$.} In \textsc{SFO} we restrict to kernels $\kappa(x,x') = g(x-x')$ with $g \in L^2(\mathcal{D})$. Let $\{\phi_ l\}_{ l\ge1}$ be an orthonormal basis, and define $g_L:= \sum_{ l=1}^{L}\theta_ l \phi_ l$. Then $\kappa_L(x,x') := g_L(x-x')$ is the orthogonal projection of $g$ onto $\mathrm{span}\{\phi_1,\ldots,\phi_L\}$, hence $\|g-g_L\|_{L^2}\to 0$ as $L\to\infty$. In the $M$-dimensional tied-index extension, we obtain $L$ separable modes, and as $L \to \infty$ the approximation converges in $L^2$ to the best tied-index separable approximation.

\textbf{Discrete Form.}
Discretizing the operator in \eqref{eq:update_continuous_sfo_vector}
on a uniform grid $\{x_j\}_{j=1}^{n}$ with spacing $\Delta x$ gives
\begin{equation}
\label{eq:discrete_kernel}
(\mathcal K_\theta v^{t})(x_i)
=
\sum_{l=1}^{L} 
\Theta_l
\sum_{j=1}^{n}
\phi_l(x_i - x_j)\, v^{t}(x_j)\, \Delta x,\qquad
i = 1,\dots,n.
\end{equation}
The inner sum is a discrete convolution between the filter $\phi_l$ and $v^{t}$, so the update can be written as
\begin{equation}
\label{eq:conv_update_discrete_sfo_vector}
(\mathcal K_\theta v^{t})(x_i)
=
\sum_{l=1}^{L}
\Theta_l\,
(\phi_l * v^{\,t})(x_i),
\qquad i = 1,\dots,n,
\end{equation}
which can be computed efficiently via the Fast Fourier Transform (FFT).

\textbf{Compact Parameterization.} Instead of factorizing a learned kernel, we represent $\kappa$ directly in the USB basis and learn only the spectral coefficients $\{\Theta_l\}_{l=1}^{L}$, yielding a compact parameterization.

\textbf{Iterative Updates.}
Kernel-based neural operators follow an iterative update scheme in which the latent field evolves across layers \cite{li2020fourier}. The input $a(x)$ is first lifted to a high-dimensional representation
$v^{0}(x) = P((a(x), x))$,
where $P$ is a linear map. After $T$ updates, the final latent state $v^{T}$ is projected back to the solution space by $Q$, yielding
$u(x) = Q(v^{T}(x))$.

In \textsc{SFO}, each layer $t=0,\dots,T-1$ first normalizes the latent field, then applies an SFO operator $\mathcal K_{\theta}^{t}$, followed by the residual MLP nonlinearity used in~\cite{liu2024flash}.
Let $N(\cdot)$ denote Layer Normalization applied to the latent field. The update rule is
\[
\tilde v^{\,t}(x)=\mathcal K^t_{\theta}\!\left(N(v^t)\right)(x), \qquad
v^{t+1}(x)=v^{t}(x)+W_2^t\,\sigma\!\bigl(W_1^t \tilde v^{\,t}(x)\bigr).
\]

\subsection{Expressivity}
SFO attains high expressivity by expanding the kernel in a fixed orthonormal Hilbert (USB) basis and learning only the expansion coefficients. The only approximation is truncation to the top $L$ modes; empirically, the Hilbert spectrum decays rapidly, so modes beyond $l\!\approx\!20$ are often negligible, and a small rank suffices for an accurate and efficient global kernel. While our formal guarantees are proved for PDEs whose kernel function exhibits LDS
structure (Theorem~\ref{thm:main_learnability}; Appendix~\ref{app:theoretical_analysis}), we find the same structure effective across a broad range of PDE operator-learning benchmarks (Fig.~\ref{fig:hilbert_decay}).

This design contrasts with prior approaches. Unlike SVD-NO, which learns a data-adaptive low-rank basis and can therefore be more sensitive to data coverage, SFO fixes the kernel basis and learns only the expansion coefficients. FNO also uses a fixed basis (Fourier), but its low-frequency truncation imposes an explicit smoothness bias. In contrast, SFO truncates only for efficiency: due to the rapid spectral decay of the USB, the omitted modes are typically negligible. Finally, unlike graph-based operators, where locality requires deep stacking and can cause oversmoothing, the global support of modes enables global information propagation in a single layer.


\subsection{Computational Complexity}
\label{subsec:complexity}
\textbf{Time Complexity.}
Let $n$ be the number of grid points, $d$ the latent dimension, and $L$ the number of modes. An SFO layer \eqref{eq:conv_update_discrete_sfo_vector} computes a FFT of the $d$-channel field $v^t$, costing $O(dn\log n)$, multiplies by the precomputed transforms of $\{\phi_l\}_{l=1}^L$ costing $O(Ldn)$, applies $\Theta_l\in\mathbb{R}^{d\times d}$ and sums over modes in the frequency domain, costing $O(Ld^2n)$, then computes a inverse FFT. Thus, complexity is
\[
O\!\left(dn\log n + Ldn + Ld^2n\right).
\]

\textbf{Memory Complexity.}
Each layer stores latent fields and FFT buffers of size $O(dn)$, filters $\{\phi_l\}_{l=1}^L$ of size $O(Ln)$, and matrices $\{\Theta_l\}_{l=1}^L$ of size $O(Ld^2)$. Thus, the complexity is
\[
O(dn + Ln + Ld^2).
\]


See App.~\ref{app:runtime} and \ref{app:Training_Efficiency} for runtimes and model sizes; SFO is among the fastest and most memory-efficient.
\section{Empirical Evaluation}
\subsection{Experimental Methodology}
\label{sec:exp_method}
We evaluate SFO on diverse benchmarks against strong baselines using the \emph{mean relative $L_2$ error (\%)}, the standard metric in operator-learning benchmarks \cite{pdenneval}:
\begin{equation}
\label{eq:mean_l2_rel_err}
L_2(\%) = 100 \times \frac{1}{N}\sum_{i=1}^{N}
\frac{\lVert u_i - \hat{u}_i \rVert_2}{\lVert u_i \rVert_2}.
\end{equation}  

\subsection{Baseline Models}
We compare SFO to eight baselines: SVD-NO, DeepONet, MNO, Transolver, FNO, UNO, MPNN, and PINO, spanning all neural-operator categories described in Sec.~\ref{related_work}. These representatives are drawn from PDENNEval~\cite{pdenneval} survey together with SVD-NO, MNO and Transolver.

\subsection{Datasets}
\label{sec:datasets}
We evaluate on standard PDE benchmarks from~\cite{pdenneval}, spanning reaction--diffusion (Allen--Cahn, Diffusion--Reaction/Sorption, Cahn--Hilliard), fluid dynamics (Shallow Water), and electromagnetics (Maxwell). Each dataset provides input--solution pairs $\{(a_j,u_j)\}_{j=1}^N$, where $a_j$ is the initial condition and $u_j$ the full trajectory.

The local linearized operators underlying Allen--Cahn, Diffusion--Reaction, and Cahn--Hilliard can exhibit LDS-structured integral kernels, making our theory informative even in these nonlinear settings. App.~\ref{app:theoretical_analysis} presents the linear theory, and Sec.~\ref{sec:nonlinear_theory} discusses its extension to nonlinear PDEs.

Table~\ref{tab:datasets} summarizes the datasets. Additional details are provided in App.~\ref{app:datasets}.
\begin{table*}[t]
\centering
\scriptsize
\setlength{\tabcolsep}{3pt}

\caption{Summary of PDE datasets used in evaluation.}
\scalebox{\tablescale}{%
\begin{tabular}{l l l l}
\toprule
\textbf{Dataset} & \textbf{Equation} & \textbf{Input} & \textbf{Output} \\
\midrule
Allen--Cahn & $\partial_t u - \epsilon \partial_{xx}u + 5u^3 - 5u = 0$ 
& $u(x,0)\in\mathbb{R}^{256}$ 
& $u(x,t)\in\mathbb{R}^{256\times101}$ \\

Diffusion--Reaction & $\partial_t u - 0.5\partial_{xx}u - u(1-u)=0$
& $u(x,0)\in\mathbb{R}^{256}$
& $u(x,t)\in\mathbb{R}^{256\times101}$ \\

Diffusion--Sorption & $\partial_t u - \frac{D}{R(u)}\partial_{xx}u=0$
& $u(x,0)\in\mathbb{R}^{256}$
& $u(x,t)\in\mathbb{R}^{256\times101}$ \\

Cahn--Hilliard & $\partial_t u - \partial_{xx}(10^{-2}(u^3-u)-10^{-6}\partial_{xx}u)=0$
& $u(x,0)\in\mathbb{R}^{1024}$
& $u(x,t)\in\mathbb{R}^{1024\times101}$ \\

Shallow Water & $\partial_t h + \nabla \cdot (h u) = 0$
& $(h,b)\in\mathbb{R}^{128\times128}$
& $(h,u,v)\in\mathbb{R}^{128\times128\times101}$ \\

Maxwell & $
\nabla \cdot E = 0,\nabla \cdot H = 0,
\nabla \times E = - \frac{\mu}{c} \frac{\partial H}{\partial t}, \nabla \times H = \frac{\epsilon}{c} \frac{\partial E}{\partial t}
$
& $(E,H)\in\mathbb{R}^{32\times32\times32\times2\times6}$
& $(E,H)\in\mathbb{R}^{32\times32\times32\times8\times6}$ \\
\bottomrule
\end{tabular}
}
\label{tab:datasets}
\end{table*}

\subsection{Experimental Setup}
Following \cite{li2020fourier}, we train all models using a unified protocol: 500 epochs (200 for Shallow Water), a 90\%/10\% train--validation split, and the \textsc{Adam} optimizer with learning rate $10^{-3}$. Model selection is performed based on validation performance, and all time steps are learned jointly.


\textbf{Hyperparameter Tuning.}
App.~\ref{app:hyper} lists the selected hyperparameters.
For SFO, we tune only the rank $L$ (number of USB modes) and the lifting dimension $d$.
We use $\sigma=\mathrm{gelu}$ and $T=4$ operator layers, following \cite{koren2025svd}.
We perform a grid search over $L\in\{16,20\}$ and $d\in\{32,64,128\}$.
We restrict $L$ to $\{16,20\}$ as the USB spectrum decays rapidly and performance saturates around $L\approx 20$.
Fig.~\ref{fig:loss-vs-l} and Table~\ref{tab:d_sweep_testloss} report ablations over $L$ and $d$.

\textbf{Baseline Hyperparameters.}
All baselines are trained under the same optimization protocol.
We use the data-specific configurations from PDENNEval~\cite{pdenneval} and SVD-NO~\cite{koren2025svd}, and fine-tune MNO~\cite{mno} and Transolver~\cite{transolver} accordingly. 
A local sweep around the reported settings showed no improvement.


\section{Results}
\label{sec:results}
\vspace{-1.5em}
\begin{table*}[ht!]
\centering
\scriptsize
\setlength{\tabcolsep}{2pt}
\caption{Mean $L_2$ relative error (\%). Best results are in \textbf{bold}, second-best are \underline{underlined}. The $\pm$ values denote 95\% confidence intervals. In Diffusion-Sorption, the gaps between methods are on the order of $10^{-3}$; so we report a scaled value $10 \times L_2(\%)$  for readability.}
\scalebox{\tablescale}{%
\begin{tabular}{lccccccccc}
\toprule
         
  & \textbf{SFO}
  & \textbf{SVD-NO}
  & \textbf{FNO}  
  & \textbf{MNO} 
  & \textbf{UNO}
  & \textbf{PINO}
  & \textbf{MPNN}
  & \textbf{Transolver} 
  & \textbf{DeepONet}         \\

\cmidrule(lr){2-10}
\textit{Category} 
& \multicolumn{1}{c}{\textit{USB Kernel}}
& \multicolumn{1}{c}{\textit{SVD Kernel}}
& \multicolumn{1}{c}{\textit{Fourier}}
& \multicolumn{1}{c}{\textit{SSM}}
& \multicolumn{1}{c}{\textit{Fourier}}
& \textit{PINN, Fourier}
& \multicolumn{1}{c}{\textit{Graph}}
& \multicolumn{1}{c}{\textit{Transformer}}
& \multicolumn{1}{c}{\textit{DeepONet}}
\\
\midrule

1D Allen-Cahn
  & \textbf{0.05} $\pm$ 0.012
  & \underline{0.07} $\pm$ 0.007
  & 0.08 $\pm$ 0.001
  & \underline{0.07} $\pm$ 0.021 
  & 0.30 $\pm$ 0.013
  & 0.08 $\pm$ 0.001
  & 0.33 $\pm$ 0.009
  & 0.53 $\pm$ 0.002 
  & 16.5 $\pm$ 0.230                          \\

1D Diffusion-Sorption
  & \textbf{1.08} $\pm$ 0.006
  & \underline{1.09} $\pm$ 0.002
  & \underline{1.09} $\pm$ 0.001
  & \underline{1.09} $\pm$ 0.001
  & 1.10 $\pm$ 0.001
  & 1.09 $\pm$ 0.001
  & 2.86 $\pm$ 0.004
  & 1.14 $\pm$ 0.005
  & 1.14 $\pm$ 0.001\\

1D Diffusion-Reaction 
  & \textbf{0.22} $\pm$ 0.021 
  & \underline{0.33} $\pm$ 0.010
  & 0.39 $\pm$ 0.014
  & 0.38 $\pm$ 0.020
  & 0.40 $\pm$ 0.001
  & 0.43 $\pm$ 0.014
  & 0.39 $\pm$ 0.015
  & 0.44 $\pm$ 0.002
  & 0.61 $\pm$ 0.001           \\

1D Cahn-Hilliard 
  & \textbf{0.08} $\pm$ 0.005
  & 0.47 $\pm$ 0.037
  & \underline{0.13} $\pm$ 0.004
  & 0.18 $\pm$ 0.014 
  & 0.56 $\pm$ 0.013
  & 0.16 $\pm$ 0.010
  & 0.59 $\pm$ 0.011
  & 1.10 $\pm$ 0.035
  & 8.86 $\pm$ 0.149
  \\

2D Shallow Water
  & \textbf{0.38} $\pm$ 0.088
  & \underline{0.39} $\pm$ 0.042
  & 0.49 $\pm$ 0.022
  & 8.15 $\pm$ 4.480
  & 0.52 $\pm$ 0.042
  & 0.46 $\pm$ 0.002
  & 0.50 $\pm$ 0.031
  & 0.61 $\pm$ 0.096 
  & 1.11 $\pm$ 0.235              \\



3D Maxwell
  & \textbf{40.8} $\pm$ 0.103
  &  63.5 $\pm$ 0.020
  & \underline{50.3} $\pm$   0.031
  & 115.3 $\pm$ 3.45
  & 50.4 $\pm$ 0.019
  & 56.1  $\pm$ 0.127
  & 74.4 $\pm$ 0.684
  & 83.6 $\pm$ 0.072 
  & 86.1 $\pm$ 0.596 
  \\
\bottomrule
\end{tabular}
}
\label{tab:results}
\end{table*}

The results in Table~\ref{tab:results} compare SFO against eight SOTA methods on six PDE benchmarks using the mean $L_2$ relative error (\%) \eqref{eq:mean_l2_rel_err}.
For Diffusion-Sorption, the absolute differences between methods are small (on the order of $10^{-3}$), so we report a scaled value $10 \times L_2(\%)$ for readability.

SFO achieves SOTA performance, obtaining error reductions relative to the best-performing baseline on each PDE: $28.57\%$ on Allen-Cahn, $0.9\%$ on Diffusion-Sorption, $33.33\%$ on Diffusion-Reaction, $38.46\%$ on Cahn-Hilliard, $2.56\%$ on Shallow Water, and $18.81\%$ on Maxwell. These results indicate that SFO combines strong accuracy with consistent generalization across diverse PDEs. Training times are reported in App.~\ref{app:runtime}, where SFO is among the fastest methods. In contrast, MNO performs competitively in 1D but degrades on higher-dimensional PDEs, likely due to flattening the spatial grid into a sequence, which weakens locality and harms generalization.

\textbf{Largest gains on PDEs with LDS-structure kernels.}
SFO achieves its largest improvements on Allen-Cahn, Diffusion-Reaction, and Cahn-Hilliard. 
While these PDEs are nonlinear, their local behavior can often be approximated by linear operators with LDS-structured kernels, making the linear theory informative in these settings (App.~\ref{sec:nonlinear_theory}).

\textbf{Significance of Results.}
To assess robustness, each experiment was repeated 10 times with different random seeds. For each PDE, we report the mean and 95\% confidence interval in Table~\ref{tab:results}.

\textbf{Learned Hilbert coefficients.}
To better understand the inductive bias of the USB underlying the truncation, we compute the Frobenius norm $\|\Theta_l\|_F$ for each mode $l$ in every SFO layer and average across layers.
Fig.~\ref{fig:hilbert_decay} shows that the coefficients magnitude decays with $l$, indicating SFO relies on a few modes, supporting the low-rank truncation $L$.
Additional results in all PDEs are reported in App.~\ref{app:hilbert_coeffs}.
Although our formal guarantees apply to settings where the kernel admits an LDS structure (App.~\ref{app:theoretical_analysis}), Fig.~\ref{fig:hilbert_decay} suggests that a similar low-mode bias emerges empirically in diverse PDEs.
\begin{figure}[ht]
  \centering
  \begin{tikzpicture}
    \begin{groupplot}[
      group style={
        group size=3 by 1,
        horizontal sep=1.4cm,
      },
      width=4.1cm,
      height=2.9cm,
      yticklabel style={
      /pgf/number format/fixed,
      /pgf/number format/precision=1,
      /pgf/number format/fixed zerofill
      },
      scaled y ticks=false,
      tick label style={font=\scriptsize},
      title style={font=\scriptsize},
      xlabel style={yshift=4pt,font=\scriptsize},
      ylabel style={yshift=-3pt,font=\scriptsize},
      xlabel={Hilbert mode index $ l$},
      ylabel={Coeff. Magnitude},
      legend style={draw=none,font=\scriptsize},
      legend columns=1,
    ]
      \nextgroupplot[
        title={\textbf{1D Diffusion-Sorption}},
      ]
      \addplot[blue, thick] table[row sep=\\] {
        ell y \\
        1 1.2891567 \\
        2 0.96365774 \\
        3 0.75482148 \\
        4 0.69299412 \\
        5 0.79357636 \\
        6 0.7544446 \\
        7 0.68381041 \\
        8 0.6305126 \\
        9 0.60594189 \\
        10 0.57850027 \\
        11 0.51803517 \\
        12 0.48989668 \\
        13 0.47475231 \\
        14 0.44846418 \\
        15 0.4139401 \\
        16 0.39543915 \\
        17 0.36910275 \\
        18 0.37191623 \\
        19 0.36549753 \\
        20 0.40952578 \\
      };
      \nextgroupplot[
        title={\textbf{1D Diffusion-Reaction}},
      ]
      \addplot[blue, thick] table[row sep=\\] {
        ell y \\
        1 2.0344992 \\
        2 2.00368 \\
        3 1.5870484 \\
        4 1.1051441 \\
        5 1.0237104 \\
        6 1.167325 \\
        7 1.2336972 \\
        8 1.1005946 \\
        9 0.86098617 \\
        10 0.69441676 \\
        11 0.67945987 \\
        12 0.67440712 \\
        13 0.60706371 \\
        14 0.57316959 \\
        15 0.65533209 \\
        16 0.58808571 \\
        17 0.54016036 \\
        18 0.44686103 \\
        19 0.52462894 \\
        20 0.63001537 \\
      };
      \nextgroupplot[
        title={\textbf{2D Shallow Water}},
      ]
      \addplot[blue, thick] table[row sep=\\] {
        ell y \\
        1 0.48258135 \\
        2 0.45599329 \\
        3 0.3127279 \\
        4 0.38140482 \\
        5 0.46915141 \\
        6 0.40440956 \\
        7 0.33070013 \\
        8 0.31235549 \\
        9 0.27859277 \\
        10 0.22541586 \\
        11 0.21419625 \\
        12 0.2352352 \\
        13 0.20438558 \\
        14 0.14561149 \\
        15 0.19403258 \\
        16 0.14084645 \\
        17 0.18578768 \\
        18 0.13477245 \\
        19 0.13838045 \\
        20 0.096348315 \\
      };
    \end{groupplot}
  \end{tikzpicture}
\caption{Magnitude of learned Hilbert spectral coefficients vs.\ mode index $l$. We plot $\|\Theta_l\|_F$ averaged across SFO layers. The decay shows that SFO concentrates on a few low-index modes.}
  \label{fig:hilbert_decay}
\end{figure}
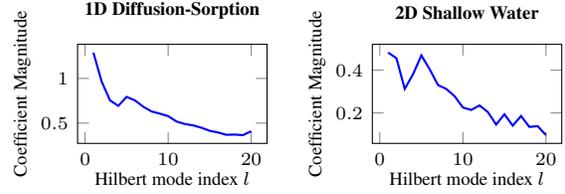

\textbf{Compact parameterization.}
Table~\ref{tab:efficiency_main} shows that across all 
PDEs, SFO uses fewer parameters than SVD-NO, is orders of magnitude smaller than DeepONet and UNO, and is slightly smaller than FNO, PINO, Transolver, and MNO in 1D and 2D and over an order of magnitude smaller in 3D. Combined with competitive runtimes, these results highlight the efficiency of SFO’s compact parameterization.

\begin{table}[ht!]
\centering
\scriptsize
\setlength{\tabcolsep}{3pt}
\caption{Runtime (seconds per epoch) averaged by dimension, with parameter counts in parentheses.}
\label{tab:efficiency_main}
\scalebox{\tablescale}{%
\begin{tabular}{lcccccccc}
\toprule
& \textbf{SFO} 
& \textbf{SVD-NO} 
& \textbf{FNO} 
& \textbf{UNO} 
& \textbf{PINO} 
& \textbf{Transolver} 
& \textbf{MNO} 
& \textbf{DeepONet} \\
\midrule
1D 
& 1.93 (283k) 
& 6.79 (586k) 
& 1.60 (300k) 
& 2.10 (1.09M) 
& 2.29 (300k) 
& 3.05 (326k) 
& 2.96 (311k) 
& 5.77 (3.45M) \\

2D 
& 19.11 (406k) 
& 84.40 (973k) 
& 13.77 (478k) 
& 19.23 (5.98M) 
& 14.55 (478k) 
& 19.13 (605k) 
& 15.47 (721k) 
& 19.81 (429M) \\

3D 
& 71.15 (119k) 
& 180.53 (125k) 
& 58.28 (3.28M) 
& 34.95 (90.22M) 
& 63.21 (3.28M) 
& 75.32 (4.13M) 
& 100.54 (8.98M) 
& 67.56 (52.03M) \\
\bottomrule
\end{tabular}
}
\end{table}

\subsection{Ablation Studies}
\label{sec:ablation}

\begin{table}[ht!]
\scriptsize
\centering
\renewcommand{\arraystretch}{0.9}
\setlength{\tabcolsep}{4pt}
\caption{Mean \(L_2\) relative error (\%) for the orthogonal-basis and GLU ablations. Best results are in \textbf{bold}, second-best are \underline{underlined}. In Diffusion-Sorption, we report \(10 \times L_2(\%)\) for readability.}
\scalebox{\tablescale}{%
\begin{tabular}{lcccc@{\hspace{6pt}}!{\vrule width 1.2pt}@{\hspace{6pt}}cc}
\toprule
& \multicolumn{4}{c}{\textbf{Fixed orthonormal basis}} 
& \multicolumn{2}{c}{\textbf{GLU vs.\ MLP}} \\
\cmidrule(lr){2-5} \cmidrule(lr){6-7}
& \textbf{SFO} 
& \textbf{Fourier} 
& \textbf{Chebyshev} 
& \textbf{Random}
& \textbf{SFO} 
& \textbf{SFO-GLU} \\
\midrule
1D Allen-Cahn
  & \textbf{0.05} $\pm$ 0.01
  & 0.17 $\pm$ 0.01
  & \textbf{0.05} $\pm$ 0.03
  & 0.16 $\pm$ 0.01
  & \textbf{0.05} $\pm$ 0.01
  & \textbf{0.05} $\pm$ 0.01
  \\

1D Diffusion-Sorption
  & \textbf{1.08} $\pm$ 0.01
  & 1.13 $\pm$ 0.02
  & \underline{1.12} $\pm$ 0.01
  & 1.14 $\pm$ 0.00
  & \textbf{1.08} $\pm$ 0.01
  & \textbf{1.08} $\pm$ 0.01
  \\

1D Diffusion-Reaction
  & \textbf{0.22} $\pm$ 0.02
  & \underline{0.24} $\pm$ 0.03
  & \underline{0.24} $\pm$ 0.02
  & \underline{0.24} $\pm$ 0.02
  & \textbf{0.22} $\pm$ 0.02
  & \underline{0.23} $\pm$ 0.03
  \\

1D Cahn-Hilliard
  & \underline{0.08} $\pm$ 0.01
  & 0.12 $\pm$ 0.01
  & \textbf{0.07} $\pm$ 0.01
  & 0.13 $\pm$ 0.00
  & \textbf{0.08} $\pm$ 0.01
  & \underline{0.09} $\pm$ 0.01
  \\

2D Shallow Water
  & \textbf{0.38} $\pm$ 0.09
  & \underline{0.57} $\pm$ 0.06
  & 0.58 $\pm$ 0.04
  & 0.61 $\pm$ 0.05
  & \underline{0.38} $\pm$ 0.08
  & \textbf{0.37} $\pm$ 0.09
  \\


  3D Maxwell
  & \textbf{40.9} $\pm$ 0.10
  & \underline{44.4} $\pm$ 0.10
  & 44.5 $\pm$ 0.16
  & 44.8 $\pm$ 0.44
  & \underline{40.85} $\pm$ 0.103 
  &  \textbf{38.55 }$\pm$ 0.075
\\
\bottomrule
\end{tabular}%
}
\label{tab:ablation_combined}
\end{table}

\textbf{Fixed Orthogonal Basis.}
To isolate the effect of USB, we replace the Hilbert basis in \textsc{SFO} with other fixed global orthonormal bases: Fourier, Chebyshev, and a random orthogonal basis (the $Q$ factor from QR of an i.i.d.\ Gaussian matrix), keeping all settings fixed.
Table~\ref{tab:ablation_combined} shows USB leads on 4/6 benchmarks, ties Chebyshev on Allen-Cahn, and is slightly worse on Cahn-Hilliard; Fourier and random bases consistently underperform.
Since all methods use the same $L$-mode budget, these results suggest that USB’s leading modes capture more useful global structure than the alternatives.


\textbf{GLU vs.\ MLP.} Inspired by \cite{liu2024flash}, we replace the update \( v_t \rightarrow v_{t+1} \) with a GLU gate:
\[
G^{t}=W^{t}_{g}\mathcal{K}^{t}_{\theta}(v^{t}), V^{t}=W^{t}_{v}\mathcal{K}^{t}_{\theta}(v^{t}),\quad
v^{t+1}(x)=v^{t}(x)+\sigma(G^{t}(x))\odot V^{t}(x),
\]


The results in Table~\ref{tab:ablation_combined} show that the performance is similar, 
suggesting that the model’s primary strength comes from the SFO kernel itself, rather than the specific layer connections.

\textbf{Extension to High-Dimensional Grids ($L$ vs.\ $L^M$ modes).}
For PDEs on an $M$-dimensional grid, the 1D Hilbert modes $\{\phi_l\}_{l=1}^L$ can be extended in two ways: \emph{multi-index} and \emph{tied-index}.

In the \emph{multi-index} formulation, we construct a separable mode for each index tuple 
$I=(i_1,\dots,i_M)\in \{1,\dots,L\}^M$:
\[
\Phi_I(x_1,\dots,x_M) = \Pi_{m=1}^M \phi_{i_m}(x_m),
\]
yielding $L^M$ modes. Each mode has an associated mixing matrix $\Theta_I \in \mathbb{R}^{d\times d}$, so both computation and parameter count scale as $O(L^M)$.

In contrast, \emph{tied-index} uses only $L$ modes \eqref{eq:tied_index}, each with a matrix $\Theta_l$, leading to $O(L)$ scaling.

To keep the multi-index experiment tractable, we use $L=6$ ($L^2=36$, $L^3=216$). Table~\ref{tab:high_dim_ablation} shows that multi-index filters slightly worsen performance on 2D Shallow Water and 3D Maxwell despite requiring $3.8\times$ and $14.9\times$ more parameters. This suggests that tied-index modes already capture most useful cross-axis structure, likely aided by the per-layer MLP. We therefore adopt the tied-index formulation, as the additional cost of multi-index bases is not justified empirically.
\begin{table}[ht!]
\scriptsize
\centering
\renewcommand{\arraystretch}{0.9}
\setlength{\tabcolsep}{5pt}
\caption{Mean $L_2$ relative error (\%) for tied-index vs.\ multi-index.}
\scalebox{\tablescale}{%
\begin{tabular}{lcccc|lcccc}
\toprule
& \textbf{$L$} & \textbf{Tied-index} & $L^M$ & \textbf{Multi-index}
& & $L$ & \textbf{Tied-index} & $L^M$ & \textbf{Multi-index} \\
\midrule
2D Shallow Water & \multirow{2}{*}{6} & 0.61 $\pm$ 0.10 & \multirow{2}{*}{36}  & 0.65 $\pm$ 0.08
& 3D Maxwell     & \multirow{2}{*}{6} &  53.78 $\pm$ 1.10 & \multirow{2}{*}{216} & 56.14 $\pm$ 2.83 \\
\# Parameters    &                      & 176,677          &                      & 668,197
& \# Parameters  &                      & 61,960           &                      & 922,120 \\
\bottomrule
\end{tabular}
}
\label{tab:high_dim_ablation}
\end{table}



\textbf{Effect of SFO Rank $L$.}
We sweep the SFO rank $L$ while keeping all other hyperparameters fixed. Figure~\ref{fig:loss-vs-l} shows that test loss decreases sharply as $L$ grows from $4$ to $16$, and then largely plateaus for $L\in[16,32]$. This indicates that the leading USB modes capture most of the operator structure, while additional modes provide only marginal gains. The observed saturation is consistent with the theory
and further suggests that this inductive bias remains useful for PDEs beyond the LDS setting.

\begin{figure}[ht!]
  \centering
  \begin{tikzpicture}
 \begin{axis}[
  width=5cm,
  height=3cm,
  yticklabel style={
  /pgf/number format/fixed,
  /pgf/number format/precision=1,
  /pgf/number format/fixed zerofill
  },
  tick label style={font=\scriptsize},
  title style={font=\scriptsize},
  xlabel={Number of singular functions $L$},
  ylabel={MinMax Test loss},
  ymin=-0.1, ymax=1.1,
  xtick={4,8,12,16,20,24,28,32},
  ylabel style={yshift=-5pt,font=\scriptsize},
  xlabel style={yshift=5pt,font=\scriptsize},
  title={\textbf{Test Loss vs. SFO Rank $L$}},
  legend style={
    at={(1.02,0.5)},      
    anchor=west,
    draw=none,
    font=\scriptsize,
    legend columns=1      
  },
]  
\addplot[green!90!black, thick] table[
      x=L,
      y expr={(\thisrow{loss}-0.000518799)/(0.00106-0.000518799)}
    ] {
L loss
4  0.00106
8  0.00075925
12 0.0007513
16 0.000518799
20 0.00052903
24 0.00065851
32 0.0006570
    };
    \addlegendentry{Allen-Cahn}

    \addplot[green!60!black, thick] table[
      x=L,
      y expr={(\thisrow{loss}-0.0011022140)/(0.0011512-0.0011022140)}
    ] {
L loss
4  0.00114858
8  0.0011512
12 0.0011444
16 0.001107600
20 0.001118932
24 0.001118932
32 0.0011022140
    };
    \addlegendentry{Diffusion-Sorption}
    \addplot[teal, thick] table[
      x=L,
      y expr={(\thisrow{loss}-0.000676445)/(0.00457776-0.000676445)}
    ] {
L loss
4  0.00457776
8  0.0018036329
12 0.0011909
16 0.0009352
20 0.0007163
24 0.00068763
32 0.000676445
    };
    \addlegendentry{Cahn-Hilliard}



\addplot[blue, thick] table[x=L, y expr={(\thisrow{loss}-0.04414862)/(0.0569406071-0.04414862)}] {
L loss
4  0.05452498
8  0.0540688095
16 0.04414862
20 0.044192
24 0.046175056
32 0.0457321
    };
\addlegendentry{Maxwell}
    \end{axis}
\end{tikzpicture}
\caption{Test loss vs.\ rank $L$. Loss drops quickly for small $L$ and then plateaus, showing that the leading modes capture most of the structure. All curves are MinMax normalized to a common scale.}
  \label{fig:loss-vs-l}
\end{figure}

\textbf{Effect of lifting dimension \(d\).}
We sweep the latent width \(d\) with all other hyperparameters fixed. 
Table~\ref{tab:d_sweep_testloss} shows that performance is relatively stable on most datasets.
\begin{table}[ht!]
\scriptsize
\centering
\renewcommand{\arraystretch}{0.9}
\caption{$L_2$ relative error (\%) for varying lifting dimension $d$ across PDEs.}
\scalebox{\tablescale}{%
\begin{tabular}{lccc@{\hspace{10pt}}lccc}
\toprule
& $d=32$ & $d=64$ & $d=128$ 
& & $d=32$ & $d=64$ & $d=128$ \\
\midrule
1D Allen-Cahn         & 0.11 & 0.05 & 0.06 
& 1D Cahn-Hilliard      & 0.21 & 0.08 & 0.11 \\
1D Diffusion-Sorption & 1.12 & 1.08 & 1.10 
& 2D Shallow Water      & 0.40 & 0.38 & 0.39 \\
1D Diffusion-Reaction & 0.24 & 0.22 & 0.27 
& 3D Maxwell            & 40.85 & 55.23 & 59.56 \\
\bottomrule
\end{tabular}
}
\label{tab:d_sweep_testloss}
\end{table}

\section{Conclusion}
We introduced SFO, a neural operator that parameterizes the PDE solution operator via its integral kernel using the Universal Spectral Basis (USB) derived from Hilbert eigenmodes. Our analysis shows that, for an important class of PDEs with LDS-structured kernels, the kernel admits a rapidly decaying USB expansion, enabling accurate approximation with logarithmically many modes. 

Empirically, SFO achieves state-of-the-art performance across six PDEs, reducing error by up to 40\% compared to the strongest prior methods while using fewer parameters. These results demonstrate that representing kernels in a fixed global orthogonal basis provides an effective inductive bias.

More broadly, SFO highlights the potential of combining spectral structure with explicit kernel parameterization for scalable operator learning. 

A limitation of SFO is its reliance on shift-invariant kernels, which may be restrictive for problems with strong boundary effects or non-stationary interactions. Extending SFO to boundary-aware or input-conditioned kernels, as well as to stochastic, inverse, and control settings, is an important direction for future work.

\bibliographystyle{plainnat}
\bibliography{SFO}

@article{demko1984decay,
  title={Decay rates for inverses of band matrices},
  author={Demko, Stephen and Moss, W. F. and Smith, Philip W.},
  journal={Mathematics of Computation},
  volume={43},
  number={168},
  pages={491--499},
  year={1984}
}

@article{benzi2007decay,
  title={Decay properties of spectral projectors and functions of banded matrices},
  author={Benzi, Michele},
  journal={SIAM Review},
  volume={49},
  number={4},
  pages={577--609},
  year={2007}
}

@inproceedings{hazan2017spectral,
  title={Learning Linear Dynamical Systems via Spectral Filtering},
  author={Hazan, Elad and Singh, Karan and Zhang, Cyril},
  booktitle={Advances in Neural Information Processing Systems (NeurIPS)},
  year={2017}
}

@article{hazan2022introduction,
  title={Introduction to online nonstochastic control},
  author={Hazan, Elad and Singh, Karan},
  journal={arXiv preprint arXiv:2211.09619},
  year={2022}
}

@article{cao2303lno,
  title={LNO: Laplace Neural Operator for Solving Differential Equations},
  author={Cao, Qianying and Goswami, Somdatta and Karniadakis, George Em},
  journal={arXiv preprint arXiv:2303.10528},
  year={2023}
}

@article{dogariu2025universal,
  title={Universal Learning of Nonlinear Dynamics},
  author={Dogariu, Evan and Brahmbhatt, Anand and Hazan, Elad},
  journal={arXiv preprint arXiv:2508.11990},
  year={2025}
}

@article{liu2024flash,
  title={Flash STU: Fast spectral transform units},
  author={Liu, Y Isabel and Nguyen, Windsor and Devre, Yagiz and Dogariu, Evan and Majumdar, Anirudha and Hazan, Elad},
  journal={arXiv preprint arXiv:2409.10489},
  year={2024}
}

@inproceedings{gino,
  title={Geometry-informed neural operator for large-scale 3D PDEs},
  author={Li, Zongyi and Kovachki, Nikola and Choy, Chris and Li, Boyi and Kossaifi, Jean and Otta, Shourya and Nabian, Mohammad Amin and Stadler, Maximilian and Hundt, Christian and Azizzadenesheli, Kamyar and others},
  booktitle={Advances in Neural Information Processing Systems (NeurIPS)},
  volume={36},
  pages={35836--35854},
  year={2023}
}

@article{koren2025svd,
  title={SVD-NO: Learning PDE Solution Operators with SVD Integral Kernels},
  author={Koren, Noam and Mackenbach, Ralf JJ and van Sloun, Ruud JG and Radinsky, Kira and Freedman, Daniel},
  journal={arXiv preprint arXiv:2511.10025},
  year={2025}
}

@inproceedings{li2020fourier,
  title={Fourier neural operator for parametric partial differential equations},
  author={Li, Zongyi and Kovachki, Nikola and Azizzadenesheli, Kamyar and Liu, Burigede and Bhattacharya, Kaushik and Stuart, Andrew and Anandkumar, Anima},
  booktitle={International Conference on Learning Representations (ICLR)},
  year={2021}
}

@inproceedings{pdenneval,
  title={PDENNEval: A Comprehensive Evaluation of Neural Network Methods for Solving PDEs},
  author={Wei, Ping and Liu, Menghan and Cen, Jianhuan and Zhou, Ziyang and Chen, Liao and Zou, Qingsong},
  booktitle={Thirty-Third International Joint Conference on Artificial Intelligence (IJCAI)},
  year={2024}
}

@article{tripura2022wavelet,
  title={Wavelet neural operator: a neural operator for parametric partial differential equations},
  author={Tripura, Tapas and Chakraborty, Souvik},
  journal={Computer Methods in Applied Mechanics and Engineering},
  volume={404},
  pages={115783},
  year={2023}
}

@article{ufno,
  title={U-FNO—An enhanced Fourier neural operator-based deep-learning model for multiphase flow},
  author={Wen, Gege and Li, Zongyi and Azizzadenesheli, Kamyar and Anandkumar, Anima and Benson, Sally M},
  journal={Advances in Water Resources},
  volume={163},
  pages={104180},
  year={2022}
}

@inproceedings{Neuralop,
  title={Neural operator: Learning maps between function spaces},
  author={Li, Zongyi},
  booktitle={2021 Fall Western Sectional Meeting},
  year={2021},
  organization={AMS}
}

@article{kno,
  title={Koopman neural operator as a mesh-free solver of non-linear partial differential equations},
  author={Xiong, Wei and Huang, Xiaomeng and Zhang, Ziyang and Deng, Ruixuan and Sun, Pei and Tian, Yang},
  journal={arXiv preprint arXiv:2301.10022},
  year={2023}
}

@book{fdm,
  title={Numerical solution of partial differential equations: an introduction},
  author={Morton, Keith W and Mayers, David Francis},
  year={2005},
  publisher={Cambridge university press}
}

@incollection{tang2017global,
  title={Global Gyrokinetic Particle-in-Cell Simulation},
  author={Tang, William and Lin, Zhihong},
  booktitle={Exascale Scientific Applications},
  pages={507--528},
  year={2017},
  publisher={Chapman and Hall/CRC}
}

@article{uno,
  title={U-NO: U-shaped neural operators},
  author={Rahman, Md Ashiqur and Ross, Zachary E and Azizzadenesheli, Kamyar},
  journal={Transactions on Machine Learning Research},
  year={2023}
}

@article{pino,
  title={Physics-informed neural operator for learning partial differential equations},
  author={Li, Zongyi and Zheng, Hongkai and Kovachki, Nikola and Jin, David and Chen, Haoxuan and Liu, Burigede and Azizzadenesheli, Kamyar and Anandkumar, Anima},
  journal={ACM/JMS Journal of Data Science},
  volume={1},
  number={3},
  pages={1--27},
  year={2024},
  publisher={ACM New York, NY}
}

@inproceedings{mpnn,
  title={Message passing neural PDE solvers},
  author={Brandstetter, Johannes and Worrall, Daniel and Welling, Max},
  booktitle={International Conference on Learning Representations (ICLR)},
  year={2022}
}

@article{gkn,
  title={Neural operator: Graph kernel network for partial differential equations},
  author={Li, Zongyi and Kovachki, Nikola and Azizzadenesheli, Kamyar and Liu, Burigede and Bhattacharya, Kaushik and Stuart, Andrew and Anandkumar, Anima},
  journal={arXiv preprint arXiv:2003.03485},
  year={2020}
}

@article{deeponet,
  title={DeepONet: Learning nonlinear operators for identifying differential equations based on the universal approximation theorem of operators},
  author={Lu, Lu and Jin, Pengzhan and Karniadakis, George Em},
  journal={Nature Machine Intelligence},
  volume={3},
  number={3},
  pages={218--229},
  year={2021}
}

@article{FDeepONet,
  title={Fourier-DeepONet: Fourier-enhanced deep operator networks for full waveform inversion with improved accuracy, generalizability, and robustness},
  author={Zhu, Min and Feng, Shihang and Lin, Youzuo and Lu, Lu},
  journal={Computer Methods in Applied Mechanics and Engineering},
  volume={416},
  pages={116306},
  year={2023}
}

@inproceedings{ffno,
  title={Factorized fourier neural operators},
  author={Tran, Alasdair and Mathews, Alexander and Xie, Lexing and Ong, Cheng Soon},
  booktitle={International Conference on Learning Representations (ICLR)},
  year={2023}
}

@article{transolver,
  title={Transolver: A fast transformer solver for pdes on general geometries},
  author={Wu, Haixu and Luo, Huakun and Wang, Haowen and Wang, Jianmin and Long, Mingsheng},
  journal={arXiv preprint arXiv:2402.02366},
  year={2024}
}

@misc{greens_pde,
  title = {Green's Functions and Nonhomogeneous Problems},
  author = {R. L. Herman},
  note = {Lecture notes},
  url = {https://people.uncw.edu/hermanr/pde1/pdebook/green.pdf}
}

@article{mno,
  title={Mamba neural operator: Who wins? transformers vs. state-space models for pdes},
  author={Cheng, Chun-Wun and Huang, Jiahao and Zhang, Yi and Yang, Guang and Sch{\"o}nlieb, Carola-Bibiane and Aviles-Rivero, Angelica I},
  journal={Journal of Computational Physics},
  pages={114567},
  year={2025},
  publisher={Elsevier}
}

\newpage
\appendix

\section{Theory: Learnability of Local Stable Discretizations}
\label{app:theoretical_analysis}

We present a theoretical explanation for why Spectral Filtering Operators (SFO)
can efficiently learn solution operators arising from a broad class of PDE
discretizations.
Our main result shows that inverse operators arising certain stable, local, shift-invariant
discretizations admit approximation by SFO with logarithmic mode complexity.

For simplicity, we consider an infinite grid in which the PDE lives, $\mathbb Z$, as opposed to finite discretization. We discuss the extension to finite grids at the end of this section. For the sake of computational feasibility, we considered a centered finite window on the grid, ignoring the tails and boundary effects.

\subsection{Main theorem}

\begin{theorem}[Learnability of stable local shift-invariant discretizations]
\label{thm:main_learnability}
Let $A$ be the block three-point stencil operator on $\ell_2(\mathbb Z;\mathbb R^d)$ defined by
\[
(Au)[i] := A_0 u[i] + A_1 (u[i-1] + u[i+1]),
\]
where $A_0, A_1 \in \mathbb{R}^{d \times d}$ are symmetric, commuting matrices.
Let $\{(a_j, b_j)\}_{j=1}^d$ denote the pairs of corresponding eigenvalues for $A_0$ and $A_1$.
If the stability condition $a_j > 2|b_j|$ holds for all $j=1,\dots,d$,
then $A$ is invertible and for any $\varepsilon > 0$, there exists an SFO-parameterized operator $M_L$ such that
\[
\|A^{-1} - M_L\| \le \varepsilon, \quad \text{with} \quad L = \widetilde O(\log(1/\varepsilon)).
\]
\end{theorem}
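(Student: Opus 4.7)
The plan is to exploit the simultaneous diagonalization afforded by the symmetry/commutativity assumption to reduce the block problem to $d$ decoupled scalar three-point stencils, and then invoke the known geometric-decay structure of their inverses together with the USB learnability bound for geometrically decaying kernels. Concretely, since $A_0, A_1$ are symmetric and commute, there exists an orthogonal $U\in\mathbb R^{d\times d}$ such that $U^T A_0 U = \mathrm{diag}(a_j)$ and $U^T A_1 U = \mathrm{diag}(b_j)$. Conjugating the operator $A$ pointwise by $U$ turns the block operator into a direct sum of $d$ scalar Toeplitz stencils $K_j$ with $(K_j v)[i] = a_j v[i] + b_j (v[i-1]+v[i+1])$. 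Since conjugation by $U$ is an isometry on $\ell_2(\mathbb Z;\mathbb R^d)$, any operator-norm bound proved on the diagonalized system transfers directly to $A^{-1}$.

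Next, I would show each scalar $K_j$ is invertible with a geometrically decaying Green's function. The cleanest route is to pass to the symbol: the generating symbol of $K_j$ on the unit circle is $p_j(z)=a_j + b_j(z+z^{-1})$, which under $a_j>2|b_j|$ is bounded away from zero, so the Fourier coefficients of $1/p_j(z)$ decay geometrically. Equivalently, one can solve the characteristic recurrence and observe that its stable root $\rho_j$ satisfies $|\rho_j|<1$ precisely under the stability condition, yielding entries $(K_j^{-1})[i,k] = c_j\,\rho_j^{|i-k|}$ for explicit $c_j, \rho_j$. This identifies $K_j^{-1}$ as a shift-invariant kernel of the exact form in Eq.~(\ref{eq:geom_kernel}), placing it inside the class handled by USB spectral filtering.

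At this point I would invoke the USB approximation guarantee for stable LDS from \cite{hazan2017spectral} (as cited in Section~\ref{sec:usb_pde_kernels}): for each fixed scalar channel $j$, there exist coefficients $\{\theta^{(j)}_\ell\}_{\ell=1}^{L}$ so that the truncated USB expansion approximates the geometric kernel in $\ell_2$ up to error $\varepsilon_j$ using $L_j = \widetilde O(\log(1/\varepsilon_j))$ modes, with a logarithmic dependence on $1/(1-|\rho_j|)$ absorbed into the $\widetilde O(\cdot)$. Setting $\varepsilon_j = \varepsilon/\sqrt{d}$ and $L = \max_j L_j$, we get a single uniform mode budget $L = \widetilde O(\log(1/\varepsilon))$ that works across all channels.

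Finally, I would assemble the scalar truncations into a single SFO operator by defining the channel-mixing matrices in the original (undiagonalized) frame as $\Theta_\ell := U\,\mathrm{diag}\bigl(\theta^{(1)}_\ell,\dots,\theta^{(d)}_\ell\bigr)\,U^T$, so that the SFO-parameterized operator $M_L$ with filters $\{\phi_\ell\}_{\ell=1}^L$ and coefficients $\{\Theta_\ell\}$ coincides, after conjugation by $U$, with the direct sum of the $d$ truncated scalar expansions. The operator-norm error is then controlled channel-wise by the scalar USB truncation errors; using unitary invariance and a straightforward bound on the norm of a direct sum, $\|A^{-1}-M_L\| \le \max_j \varepsilon_j \le \varepsilon$, which establishes the claim. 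I expect the main technical step to be the careful invocation of the USB truncation bound with an explicit dependence on $|\rho_j|$, and the verification that the worst-case $|\rho_j|$ (determined by the channel closest to marginal stability $a_j \approx 2|b_j|$) still yields only a polylogarithmic penalty so that $L = \widetilde O(\log(1/\varepsilon))$ remains valid uniformly; extending from the infinite-grid setting to the finite, centered window used in practice is routine after truncation of the geometrically small tails.
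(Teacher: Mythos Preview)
Your proposal is correct and follows essentially the same two-step structure as the paper: simultaneous orthogonal diagonalization of $A_0,A_1$ to decouple into $d$ scalar three-point stencils, explicit computation of the geometric Green's kernel $c_j\rho_j^{|i-k|}$ for each, and then the USB truncation bound from \cite{hazan2017spectral} applied channelwise and reassembled via $\Theta_\ell = U\,\mathrm{diag}(\theta^{(j)}_\ell)\,U^\top$. The paper packages the first step as a standalone lemma and is terser about the error aggregation and the finite-window truncation, but the argument is the same.
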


\subsection{Examples}
\label{thm:example}
We show certain examples where the stencil operator in \ref{thm:main_learnability} naturally arises in discretization:
\begin{itemize}
    \item Let $u(x)\in \mathbb{R}^m$ satisfy on $(0, 1)$ the steady coupled diffusion-reaction equation $-Du''(x)+Ru(x) = f(x)$ with Dirichlet boundary conditions $u(0)=u_L,\,u(1)=u_R$. Picking uniform grid $x_0, \ldots, x_{N+1}$ with $x_i - x_{i-1} = h$, we approximate
    $$
    u''(x_i) = \frac{u_{i-1}-2u_i + u_{i+1}}{h^2}
    $$
    Plugging this into the PDE we get for $i \in [N]$ exactly the 3 point stencil form
    $$
    \overbrace{\Big(R + \frac{2}{h^2}D \Big)}^{A_0} u_i + \overbrace{\Big(-\frac{1}{h^2}D \Big)}^{A_1} (u_{i-1} + u_{i+1}) = f_i \; \Rightarrow \; (Au)[i] = f_i
    $$
    Stacking
    $$U =(u_1, \ldots, u_m), \qquad\qquad F=(f_1 -B_0u_L,\; f_2,\; \ldots,\; f_{N-1},\; f_N - A_1u_R)$$ we can write $AU=F$, where $A$ is the tridiagonal block matrix with $A_0$ in the diagonal and $A_1$ in the off diagonals.
    \item Let $c(x,t)\in \mathbb{R}^m$ satisfy on $(0,1)\times (0,\infty)$ the unsteady coupled diffusion--reaction equation
\[
\partial_t c(x,t) = D\,c''(x,t) - R\,c(x,t) + s(x),
\]
with Dirichlet boundary conditions $c(0,t)=u_L,\; c(1,t)=u_R$ and initial condition $c(x,0)=c_0(x)$. Picking the same uniform grid $x_0,\ldots,x_{N+1}$ with spacing $h$, we approximate for $i\in [N]$
\[
c''(x_i,t) \approx \frac{c_{i-1}(t)-2c_i(t)+c_{i+1}(t)}{h^2}.
\]
Plugging into the PDE gives the semi-discrete (method-of-lines) system
\[
\dot c_i(t) = \frac{1}{h^2}D\big(c_{i-1}(t)-2c_i(t)+c_{i+1}(t)\big) - R\,c_i(t) + s_i.
\]
Rearranging yields the same 3-point stencil operator acting on the spatial grid function:
\[
\overbrace{\Big(R+\frac{2}{h^2}D\Big)}^{A_0}c_i(t) + \overbrace{\Big(-\frac{1}{h^2}D\Big)}^{A_1}\big(c_{i-1}(t)+c_{i+1}(t)\big) = s_i - \dot c_i(t)
\;\;\Rightarrow\;\; (Gc(t))[i]= s_i-\dot c_i(t),
\]
where $(Gc)[i]=A_0 c_i + A_1(c_{i-1}+c_{i+1})$. Stacking interior unknowns
\[
C(t)=(c_1(t),\ldots,c_N(t)),\qquad S=(s_1 - A_1u_L,\; s_2,\;\ldots,\; s_{N-1},\; s_N - A_1u_R),
\]
we obtain
$\dot C(t) = -G\,C(t) + S$
where $G$ is as before.

\end{itemize}

\subsection{Proof idea}

The proof decomposes into two conceptual steps.

\paragraph{Step 1: Locality and stability imply exponential decay.}
It is a classical result in numerical analysis and operator theory that the
inverse of a banded, well-conditioned operator exhibits exponential off-diagonal
decay (see, e.g., \cite{demko1984decay,benzi2007decay}).
In our setting, this implies that the inverse of a stable local stencil operator
has a Green's kernel function that decays exponentially with spatial distance.
We include a short proof in the simplest stencil case for completeness.

\paragraph{Step 2: Exponential decay implies spectral filter learnability.}
Results from the spectral filtering literature show that exponentially decaying
impulse responses admit accurate approximation using a small number of
Hankel/USB modes.
In particular, the projection error onto the top spectral filtering modes decays
exponentially, yielding logarithmic mode complexity
\cite{hazan2017spectral}.

\subsection{Setup and Notation}
Let \(M : (\mathbb R^d)^{\mathbb Z} \to (\mathbb R^d)^{\mathbb Z}\) be an operator on vector fields on the infinite discrete grid \(\mathbb Z\), and let \(u:\mathbb Z\to\mathbb R^d\). We say that \(M\) is \emph{translation-equivariant} if
\[
\tau_s M(u) \;=\; M(\tau_s u)\qquad \text{for all } s\in\mathbb Z,
\]
where \(\tau_s\) is the right-shift operator \((\tau_s u)[i] = u[i-s]\). It can be seen that the 3-point stencil operator \(A\) in \ref{thm:main_learnability} is linear and translation-equivariant.

Assume in addition that \(M\) is invertible (as an operator on a suitable sequence space so that the convolutions below are well-defined). Define the matrix-valued impulse response (Green's kernel) of \(M\) by
\[
G \;:=\; M^{-1}\delta_0 \, \text{Id} \;\in\; (\mathbb R^{d\times d})^{\mathbb Z}.
\]
where $\delta_0[i] = \mathbf 1_{(i = 0)}$ is the Dirac distribution.
Equivalently, \(G\) satisfies
\[
M(G) \;=\; \delta_0\, \text{Id}.
\]
Then, for any \(v\) for which the series converges, \(M^{-1}\) acts by discrete convolution with \(G\):
\begin{equation}\label{eq:green_funct}
(M^{-1}v)[i] \;=\; (G * v)[i] \;:=\; \sum_{j\in\mathbb Z} G[i-j]\,v[j].
\end{equation}
We call \(G\) the \emph{Green's kernel} of \(M\).
This terminology mirrors PDE theory: the Green's kernel function \(G(x,s)\) of a linear PDE \(\mathcal L u=f\) satisfies
\begin{equation}\label{eq:green_in_pde}
\mathcal L u=f \quad \Rightarrow\quad
u(x) \;=\; (\mathcal L^{-1}f)(x) \;=\; \int G(x,s)\,f(s)\,ds .
\end{equation}
If \(\mathcal L\) (and its discretization) is translation-invariant, then \(G(x,s)=G(x-s)\), and \eqref{eq:green_in_pde} reduces to a convolution representation, matching \eqref{eq:green_funct}. In this setting, \(M\) discretizes \(\mathcal L\), and \(M^{-1}\) is the discrete solution operator.

\subsection{Lemma 1: Local stability implies exponential decay}

\begin{lemma}[Block stencil yields matrix-valued exponential decay]
\label{lem:block_exp_decay}
Under the assumptions of Theorem G.1, $A$ is linear, translation-equivariant, and invertible, and its Green's kernel function $G$ can be written as a symmetric, matrix-valued exponentially decaying function:
$$ G[t] = \sum_{k=1}^d \Theta_k r_k^{|t|},\qquad |r_k| < 1$$
\end{lemma}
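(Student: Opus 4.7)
The main idea is that the commuting symmetric matrices $A_0, A_1$ admit a simultaneous orthogonal diagonalization, so the block three-point stencil on $\ell_2(\mathbb Z; \mathbb R^d)$ decouples, after a change of basis on the fiber $\mathbb R^d$, into $d$ independent scalar three-point stencils on $\ell_2(\mathbb Z)$. Concretely, I would write $A_0 = U D_0 U^\top$ and $A_1 = U D_1 U^\top$ with the same orthogonal $U$, and $D_0 = \mathrm{diag}(a_j)$, $D_1 = \mathrm{diag}(b_j)$. Applying $U^\top$ pointwise over the grid conjugates $A$ to the block-diagonal operator whose $j$-th scalar block is $(A_j v)[i] = a_j v[i] + b_j(v[i-1]+v[i+1])$. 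Since orthogonal change of basis preserves invertibility and Green's kernels, it suffices to (i) construct the Green's function $G_j \in \ell_2(\mathbb Z)$ of each scalar block, (ii) verify exponential decay, and (iii) conjugate back.

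For each scalar stencil, I would solve $A_j G_j = \delta_0$ by the standard recurrence argument. Away from $t=0$, the homogeneous equation $b_j G_j[t+1] + a_j G_j[t] + b_j G_j[t-1] = 0$ has characteristic polynomial $b_j r^2 + a_j r + b_j = 0$. The stability hypothesis $a_j > 2|b_j|$ makes the discriminant strictly positive, so the two roots $r_j, 1/r_j$ are real with product $1$, hence exactly one, call it $r_j$, satisfies $|r_j|<1$. Requiring $G_j \in \ell_2(\mathbb Z)$ forces $G_j[t] = c_j r_j^{|t|}$ (discarding the growing root), and matching the delta at $t=0$ yields $c_j = 1/(b_j(r_j - 1/r_j))$, a straightforward calculation from Vieta's relations (the degenerate case $b_j=0$ reduces to $G_j = a_j^{-1}\delta_0$ and fits the same template with $r_j = 0$). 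Invertibility of $A$ on $\ell_2(\mathbb Z;\mathbb R^d)$ follows either from this explicit bounded inverse or, equivalently, from the Fourier symbol $a_j + 2b_j \cos\theta$ being bounded away from zero under $a_j>2|b_j|$.

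Undoing the diagonalization gives the matrix-valued Green's function $G[t] = U \,\mathrm{diag}\bigl(c_j r_j^{|t|}\bigr)\,U^\top = \sum_{k=1}^d c_k r_k^{|t|}\, u_k u_k^\top$, where $u_k$ are the columns of $U$. Setting $\Theta_k := c_k\, u_k u_k^\top$ yields symmetric rank-one coefficients and the claimed representation
\[
G[t] = \sum_{k=1}^d \Theta_k\, r_k^{|t|}, \qquad |r_k|<1.
\]
Translation equivariance is immediate from the shift-invariance of $A$ (every coefficient depends only on $|t|$), and the worst-case decay rate $\rho := \max_k |r_k| < 1$ quantifies the off-diagonal decay needed for the USB truncation bound in the subsequent step of Theorem G.1.

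\textbf{Expected obstacles.} The argument is largely mechanical once the simultaneous diagonalization is in place; the only delicate points are (a) confirming that the stability condition $a_j > 2|b_j|$ is exactly what is needed to (i) separate the two characteristic roots across the unit circle and (ii) make the Toeplitz symbol invertible, and (b) handling the degenerate fiber $b_j=0$ consistently within the $r_k^{|t|}$ template. Neither is a genuine obstruction; the main conceptual step is recognizing that commutativity plus symmetry is the precise hypothesis that reduces the block problem to $d$ uncoupled scalar problems whose Green's functions are known in closed form.
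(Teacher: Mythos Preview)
Your proposal is correct and follows essentially the same approach as the paper: simultaneous orthogonal diagonalization of $A_0,A_1$ to decouple into $d$ scalar three-point stencils, solving each via the characteristic equation and $\ell_2$-decay to obtain $g_k[t]=c_k r_k^{|t|}$, then conjugating back to get $G[t]=\sum_k c_k r_k^{|t|}\,u_k u_k^\top$. The only cosmetic differences are that the paper proves invertibility via a Neumann series on $I+\tfrac{\beta_k}{\alpha_k}(S+S^{-1})$ rather than the Fourier symbol, and writes the normalization as $c_k=1/\sqrt{a_k^2-4b_k^2}$, which is algebraically identical to your $1/(b_k(r_k-1/r_k))$ via Vieta.
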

\begin{proof}

Since $A_0,B_0\in\mathbb R^{d\times d}$ are symmetric and commute, they are simultaneously orthogonally diagonalizable: there exists an orthogonal matrix $U$ such that
\[
A_0=U \operatorname{diag}(\alpha_1,\dots,\alpha_d) U^\top, \qquad B_0=U \operatorname{diag}(\beta_1,\dots,\beta_d)  U^\top,
\]
Writing $\tilde u:=U^\top u$ and $\tilde v:=U^\top v$, the equation $Au=v$
decouples into
\[
\alpha_k \tilde u_k[t]+\beta_k\bigl(\tilde u_k[t-1]+\tilde u_k[t+1]\bigr)
=\tilde v_k[t], \qquad k=1,\dots,d.
\]
Equivalently, for each $k$ define the scalar operator on $\ell^2(\mathbb Z)$
\[
(\tilde A^{(k)}w)[t]:=\beta_k w[t-1]+\alpha_k w[t]+\beta_k w[t+1].
\]

Let $S:\ell^2(\mathbb Z)\to\ell^2(\mathbb Z)$ be the shift $(Sw)[t]=w[t-1]$, so $S$ is unitary and $\|S\|=\|S^{-1}\|=1$.
Then
\[
\tilde A^{(k)}=\alpha_k\Bigl(I+\frac{\beta_k}{\alpha_k}(S+S^{-1})\Bigr),
\qquad
\Bigl\|\frac{\beta_k}{\alpha_k}(S+S^{-1})\Bigr\|\le \frac{2|\beta_k|}{\alpha_k}<1.
\]
Hence $I+\frac{\beta_k}{\alpha_k}(S+S^{-1})$ is invertible by the Neumann series, so each $\tilde A^{(k)}$ is invertible on $\ell^2(\mathbb Z)$, and therefore $A$ is invertible on $\ell^2(\mathbb Z;\mathbb R^d)$.

Since $A$ has constant coefficients, it commutes with lattice shifts $(T_su)[t]=u[t-s]$, i.e.\ it is translation equivariant. Since $A$ is linear, translation equivariant, and invertible, it admits a Green's kernel $G : \mathbb Z \rightarrow \mathbb R^{d \times d}$.

In the $U$-basis,
\[
\tilde G[t]:=U^\top G[t]U=\operatorname{diag}(g_1[t],\dots,g_d[t]),
\]
and each $g_k$ satisfies
\[
\beta_k g_k[t-1]+\alpha_k g_k[t]+\beta_k g_k[t+1]=\mathbf 1_{(t=0)}.
\]
If $\beta_k=0$, then $g_k[t]=\mathbf 1_{(t=0)}/\alpha_k$. Assume $\beta_k\neq 0$.
For $t\neq 0$ we have the homogeneous recurrence with characteristic equation
\[
\beta_k r^2+\alpha_k r+\beta_k=0,
\]
with distinct real roots $r_k^\pm$ satisfying $r_k^+r_k^-=1$. Let $r_k$ be the root with $|r_k|<1$, i.e.
\[
r_k=\frac{-\alpha_k+\sqrt{\alpha_k^2-4\beta_k^2}}{2\beta_k}, \qquad |r_k|<1.
\]
$\ell^2$-decay forces $g_k[t]=c_+ r_k^{t}$ for $t>0$ and $g_k[t]=c_- r_k^{-t}$ for $t<0$.
Since both the operator and the forcing $\delta_0$ are invariant under reflection $t\mapsto -t$, if $g_k$ solves the impulse equation then so does $t\mapsto g_k[-t]$; by uniqueness (invertibility of $\tilde A^{(k)}$) we have $g_k[t]=g_k[-t]$, hence $c_+=c_-=:C_k$ and
\[
g_k[t]=C_k\,r_k^{|t|}.
\]
Imposing the equation at $t=0$ gives
\[
\beta_k g_k[-1]+\alpha_k g_k[0]+\beta_k g_k[1]=1
\quad\Longrightarrow\quad
C_k(\alpha_k+2\beta_k r_k)=1.
\]
With the above choice of $r_k$, $\alpha_k+2\beta_k r_k=\sqrt{\alpha_k^2-4\beta_k^2}$, hence
\[
g_k[t]=\frac{r_k^{|t|}}{\sqrt{\alpha_k^2-4\beta_k^2}} \qquad (\beta_k\neq 0).
\]

Finally, we obtain
\[
G[t]
=U\,\operatorname{diag}(g_1[t],\dots,g_d[t])\,U^\top
=\sum_{k=1}^d \Theta_k\, r_k^{|t|},
\qquad
\Theta_k:=\frac{1}{\sqrt{\alpha_k^2-4\beta_k^2}}\,U_{\cdot, k} U_{\cdot, k}^\top,
\]

\end{proof}

\subsection{Lemma 2: Exponential decay implies spectral filter learnability}

\begin{lemma}[Spectral filtering approximation for exponentially decaying kernels
{\citep[Lem.~4.1]{hazan2017spectral}}]
\label{lem:sf_compression}
Let $G:\mathbb Z\to\mathbb R^{d\times d}$ satisfy
$G[t] = \sum_{k=1}^d \Theta_kr_k^{|t|}$ for $|r_k|<1$. Pick $N \geq 1$, and restrict $G$ to $\{0, 1, \ldots, N\}$.
Then for every $\varepsilon>0$, there exists a kernel $g_L$ in the span of the
first $L$ Hankel/USB spectral filtering modes such that
\[
\|g-g_L\|_{\ell_2} \le \varepsilon,
\]
with
\[
L = \widetilde O\!\big(\log(1/\varepsilon)\big).
\]
\end{lemma}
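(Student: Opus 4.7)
The plan is to reduce the matrix-valued, multi-root approximation problem to a finite collection of scalar geometric-sequence approximations and then invoke the spectral-filtering approximation theorem of Hazan--Singh--Zhang. The decomposition $G[t]=\sum_{k=1}^{d}\Theta_k r_k^{|t|}$ supplied by Lemma~\ref{lem:block_exp_decay} is linear in both the coefficient matrices $\Theta_k$ and the scalar geometric kernels $r_k^{|t|}$. Since the USB projection $P_L$ onto $\mathrm{span}\{\phi_1,\ldots,\phi_L\}$ is linear, it suffices to approximate each one-sided scalar sequence $t\mapsto r_k^{t}$ on $\{0,\ldots,N\}$ with $\tilde{O}(\log(1/\varepsilon))$ modes and then recombine with the $\Theta_k$. (The restriction to $t\in\{0,\ldots,N\}$ in the statement means I need not worry about an independent two-sided extension.)

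First, I would identify the relevant Hankel Gram operator on which USB acts well. Each one-sided geometric sequence is a point on the characteristic curve
\[
\mu:(-1,1)\to\mathbb{R}^{N+1},\qquad \mu(\alpha)=(1,\alpha,\alpha^{2},\ldots,\alpha^{N})^{\top},
\]
and the Hankel matrix $Z:=\int_{0}^{1}\mu(\alpha)\mu(\alpha)^{\top}\,d\alpha$, a close cousin of the Hilbert matrix used in the main body, is the Gram operator of this family. Its eigenvectors are precisely the USB filters $\{\phi_\ell\}_{\ell\ge 1}$. The single classical input I would invoke is the Beckermann--Townsend-type bound used in~\cite{hazan2017spectral} giving near-exponential singular-value decay $\sigma_{L}(Z)\le c_{1}\exp\!\bigl(-c_{2}L/\log N\bigr)$.

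Second, I would convert this spectral decay into a per-root projection bound. A standard rank-one computation against the Gram operator $Z$ yields $\|\mu(\alpha)-P_{L}\mu(\alpha)\|_{2}\le C(\alpha)\sqrt{\sigma_{L+1}(Z)}$, with $C(\alpha)$ uniformly bounded for $|\alpha|$ bounded away from $1$. Setting
\[
g_{L}\;:=\;\sum_{k=1}^{d}\Theta_{k}\,P_{L}\mu(r_{k}),
\]
which by construction lies in the span of the top $L$ USB modes (each entry is a linear combination of $\phi_1,\ldots,\phi_L$), the triangle inequality gives
\[
\|G-g_{L}\|_{\ell_{2}} \;\le\; \Bigl(\max_{k}\|\Theta_{k}\|\Bigr)\sum_{k=1}^{d} C(r_{k})\,\sqrt{\sigma_{L+1}(Z)},
\]
so choosing $L=\tilde{O}(\log(1/\varepsilon))$ drives the right-hand side below $\varepsilon$.

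The main obstacle will be tracking the constants $C(r_k)$ as any root approaches the unit circle: the approximation prefactor blows up as $|r_k|\to 1^{-}$ (near-marginal stability), and this blow-up is absorbed into the $\tilde{O}(\cdot)$ only because the stability hypothesis $a_{j}>2|b_{j}|$ inherited from Theorem~\ref{thm:main_learnability} forces each $|r_{k}|$ to stay uniformly bounded away from $1$, so the prefactor enters only as $\mathrm{poly}\log\!\bigl(1/(1-\max_{k}|r_{k}|)\bigr)$. The secondary technical point is ensuring $N$-independence up to logarithmic factors; this is precisely what the Beckermann--Townsend estimate delivers, and its $\log N$ dependence is what the $\tilde{O}$ notation hides. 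Everything else (the finite sum over $k=1,\ldots,d$ and the factor $\max_k\|\Theta_k\|$) contributes only a constant inside the logarithm, leaving the final mode count at $L=\tilde{O}(\log(1/\varepsilon))$ as claimed.
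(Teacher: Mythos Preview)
Your proposal is correct and takes essentially the same approach as the paper's proof sketch: both identify the restricted kernel as a (sum of) stable LDS impulse responses on the curve $\mu(\alpha)=(1,\alpha,\ldots,\alpha^N)$, and both invoke the Hazan--Singh spectral-filtering bound together with the near-exponential Hankel/Hilbert eigenvalue decay to obtain $L=\tilde O(\log(1/\varepsilon))$. Your write-up is simply more explicit than the paper's sketch---you spell out the linear decomposition over $k$, the Gram operator $Z$, and the constant tracking near $|r_k|\to 1$---but the underlying argument is the same; one small correction is that the hypothesis $a_j>2|b_j|$ from Theorem~\ref{thm:main_learnability} only gives $|r_k|<1$, not uniformly bounded away from $1$, so the $1/(1-\max_k|r_k|)$ dependence is a genuine instance-dependent constant absorbed into $\tilde O$ (as you ultimately state) rather than something the stencil hypothesis eliminates.
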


\begin{proof}[Proof sketch]
Exponentially decaying finite sequences are impulse responses of finite-horizon stable linear
dynamical systems.
The spectral filtering framework of \cite{hazan2017spectral} shows that such
responses admit exponentially accurate approximation by projection onto the top
eigenvectors of an associated Hankel matrix, with error controlled by the tail of
its spectrum (Lemma~4.1 therein).
Moreover, the Hankel spectrum decays exponentially
(see Chapter~11 of \cite{hazan2022introduction}), yielding the stated logarithmic mode complexity.

\textit{Remark:} The spectral filtering framework is defined for finite-horizon LDS. However, $G$ is defined on the infinite grid $\mathbb Z$ and is symmetric around 0. Therefore only approximate $G[0:N]$ with truncation error controlled by $N$ due to the exponentially decaying tails of $G$.
\end{proof}

We can now prove Theorem~\ref{thm:main_learnability} as a simple consequence of the proceeding two lemmas: 
\begin{proof}[Proof of Theorem \ref{thm:main_learnability}]
By Lemma~\ref{lem:block_exp_decay}, the inverse operator $A^{-1}$ has an
exponentially decaying Green's kernel function.
Lemma~\ref{lem:sf_compression} therefore yields an SFO-parameterized operator
$M_L$ approximating $A^{-1}$ to accuracy $\varepsilon$ using
$L=\widetilde O(\log(1/\varepsilon))$ modes.
\end{proof}

\paragraph{Remark (Generalization to General Stable Stencils)}

The exponential decay property holds for any stable, local, shift-invariant discretization, not just the three-point stencil. This fact is known in the literature \cite{benzi2007decay}. By itself, exponential decay does not immediately imply learnability by spectral filtering.  However, the exponential decay arising in these cases may be developed into a linear dynamical structure, as we did for the three point stencil, which is then learnable by spectral filtering.

\paragraph{Remark (Marginal Stability and Long-Range Dependencies).}
A key advantage of the spectral filtering framework is its robustness to \textit{marginal stability} ($|\rho| \to 1$).
In this regime, the Green's kernel function decays slowly (e.g., polynomially), corresponding to long-range spatial dependencies common in transport-dominated PDEs (e.g., Shallow Water, Maxwell).
Unlike local basis expansions which struggle to capture such non-local interactions efficiently, the Hankel/USB modes remain highly effective for polynomially decaying kernels, ensuring that SFO performs well even when the system operates near the stability boundary \cite{hazan2017spectral}.

\subsection{Why the linear theory is still informative for nonlinear PDEs.}
\label{sec:nonlinear_theory}
Although Theorem~\ref{thm:main_learnability} is stated for linear shift-invariant discretizations, it still provides useful intuition for nonlinear PDEs. Many nonlinear systems admit informative local linear approximations, and analytical treatments of nonlinear PDEs often rely on linearization to study local behavior or construct tractable approximations. This perspective is also reflected in prior work on nonlinear dynamics, such as \cite{dogariu2025universal}, which suggests that spectral filtering can remain effective beyond strictly linear settings. Therefore, linear approximation can still provide a meaningful and practical foundation for modeling them. A related analogy is gradient descent: its behavior is most transparent in linear or quadratic settings, yet it remains highly effective for nonlinear optimization problems as well.
\section{PDE Benchmarks}
\label{app:datasets}
\textbf{1D Allen-Cahn.}  
\\
\(
  \partial_t u - \epsilon\,\partial_{xx}u + 5u^3 - 5u = 0,\;\epsilon=10^{-4}
  \text{ on }(-1,1)\times(0,1].
\) \\
Input: Initial conditions \(u(x,0)\in\mathbb{R}^{256}\); Output: Full solution \(u(x,t)\in\mathbb{R}^{256\times101}\).

\textbf{1D Diffusion-Reaction.}  
\\
\(
  \partial_t u - 0.5\,\partial_{xx}u - u(1-u) = 0
  \quad\text{on }(0,1)\times(0,1]\;\)(periodic). \\
Input: Initial conditions \(u(x,0)\in\mathbb{R}^{256}\);
Output: Full solution \(u(x,t)\in\mathbb{R}^{256\times101}\).

\textbf{1D Diffusion-Sorption.}  
\\
\(\partial_t u - \frac{D}{R(u)}\,\partial_{xx}u = 0\) on \((0,1)\times(0,1]\);\\
Boundary conditions: \(u(0,t)=1,\;u_x(1,t)=D^{-1}u(1,t)\)
with $D=5\times10^{-4}$ and Freundlich $R(u)$.  \\
Input: Initial conditions \(u(x,0)\in\mathbb{R}^{256}\);
Output: Full solution \(u(x,t)\in\mathbb{R}^{256\times101}\).

\textbf{1D Cahn-Hilliard.}
\\
\(\partial_t u - \partial_{xx}(10^{-2}(u^3 - u) - 10^{-6}\partial_{xx}u) = 0\), \((x, t) \in (-1, 1) \times (0, 1]\) \\
Boundary conditions: \(u(-1,t) = u(1,t),\; \partial_x u(-1,t) = \partial_x u(1,t)\) \\
Input: Initial conditions \(u(x,0) \in \mathbb{R}^{1024}\);
Output: Full solution \(u(x,t) \in \mathbb{R}^{1024 \times 101}\).

\textbf{2D Shallow Water.}  
\\
$\partial_t h + \partial_x(hu) + \partial_y(hv) = 0$ \\
$\partial_t (hu) + \partial_x \left( u^2 h + \frac{1}{2} g_r h^2 \right) + g_r h\, \partial_x b = 0$\\$\partial_t (hv) + \partial_y \left( v^2 h + \frac{1}{2} g_r h^2 \right) + g_r h\, \partial_y b = 0$. \\
Input: \(\bigl(h(x,y,0),\,b(x,y)\bigr) \in \mathbb{R}^{128 \times 128}\);
Output: Full Solution \(\bigl(h,u,v\bigr) \in \mathbb{R}^{128 \times 128 \times 101}\).


\textbf{3D Maxwell’s Equations.} \\
$
\nabla \cdot E = 0, \quad \nabla \cdot H = 0,\quad \nabla \times E = - \frac{\mu}{c} \frac{\partial H}{\partial t}, \quad \nabla \times H = \frac{\epsilon}{c} \frac{\partial E}{\partial t},
$
with $\epsilon = 10$, $\mu = 1$. 
\\
Input: $\bigl(E,H\bigr)\in\mathbb{R}^{32\times32\times32\times2\times6}$; 
Output: $\bigl(E,H\bigr)\in\mathbb{R}^{32\times32\times32\times8\times6}$.

Together, these PDEs span reaction--diffusion (Diffusion-Reaction/Sorption, Allen-Cahn, Cahn-Hilliard), fluid dynamics (Shallow Water), and electromagnetics (Maxwell), providing a broad test bed. Moreover, the local linearized operators underlying Allen-Cahn, Diffusion-Reaction, and Cahn-Hilliard can exhibit LDS-structured integral kernels, making the theory informative in these nonlinear settings; App.~\ref{app:theoretical_analysis} provides the linear theory, and ~\ref{sec:nonlinear_theory} discusses its relevance to nonlinear PDEs.

These datasets are taken from \cite{pdenneval} and processed exactly as in the survey. Each dataset provides input--solution pairs $\{(a_j,u_j)\}_{j=1}^N$, where $a_j$ is the initial condition and $u_j$ the full trajectory.

\section{Analysis of learned Hilbert coefficients.}
\label{app:hilbert_coeffs}
We analyze the learned Hilbert spectral coefficients across PDE benchmarks. For each SFO layer, we compute the Frobenius norm $\|\Theta_ l\|_F$ of the coefficient matrix associated with mode $ l$ and average across layers. Figure~\ref{fig:hilbert_decay_appendix} shows that across 5/6 cases, the coefficients place more weight on low-index modes and decrease with increasing $ l$, providing additional evidence that SFO relies primarily on a small set of global Hilbert modes.

\begin{figure}[ht]
  \centering
  \begin{tikzpicture}
    \begin{groupplot}[
      group style={
        group size=3 by 2,
        horizontal sep=1.5cm,
        vertical sep=1.5cm,
      },
      width=5.0cm,
      height=3.5cm,
      tick label style={font=\scriptsize},
      title style={font=\scriptsize},
      xlabel style={yshift=4pt,font=\scriptsize},
      ylabel style={yshift=-3pt,font=\scriptsize},
      xlabel={Hilbert mode index $ l$},
      ylabel={Coefficient magnitude},
    ]
      \nextgroupplot[
        title={1D Allen-Cahn},
      ]
      \addplot[blue, thick] table[row sep=\\] {
        ell y \\
        1 3.9318442 \\
        2 3.9401011 \\
        3 2.8274977 \\
        4 2.4563622 \\
        5 2.4380355 \\
        6 2.2674465 \\
        7 2.1615543 \\
        8 1.9698896 \\
        9 1.8920445 \\
        10 1.6758238 \\
        11 1.8185999 \\
        12 2.0246053 \\
        13 1.8918247 \\
        14 1.7851831 \\
        15 1.9551882 \\
        16 2.2644284 \\
      };
      \nextgroupplot[
        title={1D Diffusion-Sorption},
      ]
      \addplot[blue, thick] table[row sep=\\] {
        ell y \\
        1 1.2891567 \\
        2 0.96365774 \\
        3 0.75482148 \\
        4 0.69299412 \\
        5 0.79357636 \\
        6 0.7544446 \\
        7 0.68381041 \\
        8 0.6305126 \\
        9 0.60594189 \\
        10 0.57850027 \\
        11 0.51803517 \\
        12 0.48989668 \\
        13 0.47475231 \\
        14 0.44846418 \\
        15 0.4139401 \\
        16 0.39543915 \\
        17 0.36910275 \\
        18 0.37191623 \\
        19 0.36549753 \\
        20 0.40952578 \\
      };
      \nextgroupplot[
        title={1D Diffusion-Reaction},
      ]
      \addplot[blue, thick] table[row sep=\\] {
        ell y \\
        1 2.0344992 \\
        2 2.00368 \\
        3 1.5870484 \\
        4 1.1051441 \\
        5 1.0237104 \\
        6 1.167325 \\
        7 1.2336972 \\
        8 1.1005946 \\
        9 0.86098617 \\
        10 0.69441676 \\
        11 0.67945987 \\
        12 0.67440712 \\
        13 0.60706371 \\
        14 0.57316959 \\
        15 0.65533209 \\
        16 0.58808571 \\
        17 0.54016036 \\
        18 0.44686103 \\
        19 0.52462894 \\
        20 0.63001537 \\
      };
      \nextgroupplot[
        title={1D Cahn-Hilliard},
      ]
      \addplot[blue, thick] table[row sep=\\] {
        ell y \\
        1 2.5906072 \\
        2 2.3156216 \\
        3 2.2359695 \\
        4 2.13448 \\
        5 2.1144969 \\
        6 2.4880245 \\
        7 2.1127443 \\
        8 2.1133435 \\
        9 2.4909306 \\
        10 2.1601806 \\
        11 1.7438855 \\
        12 2.0320044 \\
        13 2.3193636 \\
        14 2.0074909 \\
        15 1.7627034 \\
        16 2.1904857 \\
        17 2.2796738 \\
        18 2.0094874 \\
        19 1.6889458 \\
        20 2.9823902 \\
      };
      \nextgroupplot[
        title={2D Shallow Water},
      ]
      \addplot[blue, thick] table[row sep=\\] {
        ell y \\
        1 0.48258135 \\
        2 0.45599329 \\
        3 0.3127279 \\
        4 0.38140482 \\
        5 0.46915141 \\
        6 0.40440956 \\
        7 0.33070013 \\
        8 0.31235549 \\
        9 0.27859277 \\
        10 0.22541586 \\
        11 0.21419625 \\
        12 0.2352352 \\
        13 0.20438558 \\
        14 0.14561149 \\
        15 0.19403258 \\
        16 0.14084645 \\
        17 0.18578768 \\
        18 0.13477245 \\
        19 0.13838045 \\
        20 0.096348315 \\
      };
      \nextgroupplot[
        title={3D Maxwell},
      ]
      \addplot[blue, thick] table[row sep=\\] {
        ell y \\
        1 0.82454365 \\
        2 1.0228369 \\
        3 1.0880984 \\
        4 1.1698523 \\
        5 1.1825653 \\
        6 1.1246837 \\
        7 1.1603529 \\
        8 1.0221179 \\
        9 0.95661509 \\
        10 0.94955671 \\
        11 0.8759166 \\
        12 0.82723695 \\
        13 0.85766751 \\
        14 0.71817315 \\
        15 0.69946951 \\
        16 0.6292913 \\
        17 0.031619769 \\
        18 0.2652027 \\
        19 0.44308624 \\
        20 0.13204467 \\
      };
    \end{groupplot}
  \end{tikzpicture}
\caption{
Average magnitude of learned Hilbert spectral coefficients across benchmarks.
For each SFO layer, we compute the Frobenius norm $\|\Theta_ l\|_F$ of the coefficient matrix associated with mode $l$ and average across layers. The learned coefficients emphasize low-index modes, supporting the use of a truncated Hilbert expansion.
}
\label{fig:hilbert_decay_appendix}
\end{figure}
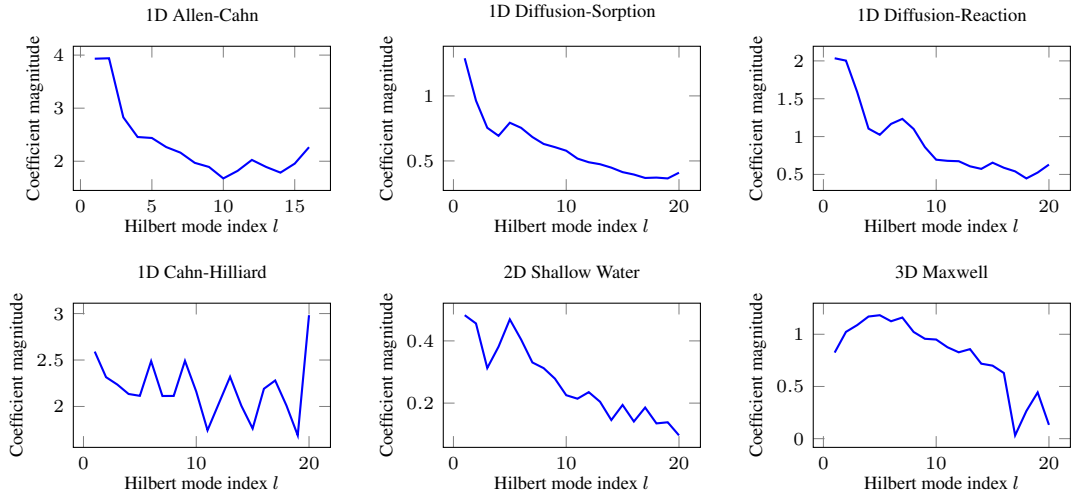

\section{Hyperparameter Tuning.}
\label{app:hyper}

Table~\ref{tab:hyperparameters} summarizes the selected hyperparameters.
For SFO, we tune only the method-specific parameters: the rank $L$ (number of Hilbert modes) and the lifting dimension $d$ (mapping $a$ to $v$).
We fix the activation to $\sigma=\mathrm{gelu}$ and use $T=4$ operator layers, following \cite{koren2025svd}.

We select $L$ and $d$ via grid search over $L\in\{16,20\}$ and $d\in\{32,64,128\}$.
We restrict $L$ to $\{16,20\}$ since the Hilbert spectrum decays rapidly and performance saturates around $L\approx 20$ (Fig.~\ref{fig:hilbert_decay}).
Fig.~\ref{fig:loss-vs-l} and Table~\ref{tab:d_sweep_testloss} report ablations over $L$ and $d$.

\begin{table}[ht!]
\scriptsize
\centering
\caption{Selected \(L\) and \(d\) per PDE}
\label{tab:hyperparameters}
\begin{tabular}{lccc}
\toprule
  & \makecell{\textbf{Rank \(L\)}} 
  & \makecell{\textbf{Lifting Dimension \(d\)}} \\ 
\midrule
1D Allen-Cahn          & 16 & 64 \\
1D Diffusion-Sorption  & 20 & 32\\
1D Diffusion-Reaction  & 20 & 64\\
1D Cahn-Hilliard       & 20 &  64 \\
2D Shallow Water       & 20 & 64 \\
3D Maxwell      & 20 & 32 \\
\bottomrule
\end{tabular}
\end{table}

\section{Compact Parameter Counts.}
\label{app:Training_Efficiency}
We report empirical model size for all baselines. Table~\ref{tab:param_avg_by_dim} summarizes the number of learnable parameters, averaged by PDE dimensionality (1D/2D/3D), providing a compact comparison across problem classes. Overall, \textsc{SFO} is parameter-efficient: it uses fewer parameters than SVD-NO and is orders of magnitude smaller than DeepONet, UNO, and U-Net. Compared to FNO and PINO, \textsc{SFO} is slightly smaller in 1D and 2D and over an order of magnitude smaller in 3D.

\begin{table}[ht!]
\scriptsize
\centering
\renewcommand{\arraystretch}{1.15}
\setlength{\tabcolsep}{6pt}
\caption{Average number of learnable parameters by PDE dimension.}
\begin{tabular}{lccccccc}
\toprule
  & \textbf{SFO} & \textbf{SVD-NO} & \textbf{DeepONet} & \textbf{\makecell{FNO and\\PINO}} & \textbf{UNO}  & \textbf{Transolver}  & \textbf{MNO}\\
\midrule
1D & 283,413 & 586,193 & 3,451,237   & 300,325   & 1,097,051 & 326,629 & 311,781 \\
2D & 406,053 & 973,289 & 429,143,808 & 478,277   & 5,984,051 & 605,717  & 721,877\\
3D & 119,304 & 125,452  & 52,034,992  & 3,287,780 & 90,221,664 & 4,129,985  & 8,984,586 \\
\bottomrule
\end{tabular}
\label{tab:param_avg_by_dim}
\end{table}

\section{Run time}
\label{app:runtime}
Table~\ref{tab:runtime_per_epoch} reports the average wall-clock training time per epoch (seconds per epoch) across PDE benchmarks under an identical training setup.
All methods were trained with the same data splits, optimizer and learning-rate schedule, batch size, and number of gradient updates per epoch.
Overall, FNO, UNO, and SFO are among the fastest methods on most benchmarks.

\begin{table*}[ht!]
\scriptsize
\centering
\renewcommand{\arraystretch}{1.15}
\setlength{\tabcolsep}{6pt}
\caption{Training time per epoch (in seconds) across PDE benchmarks.}
\begin{tabular}{lcccccccc}
\toprule
  & \textbf{SFO} & \textbf{SVD-NO} & \textbf{DeepONet} & \textbf{FNO} & \textbf{UNO} & \textbf{PINO} & \textbf{Transolver}  & \textbf{MNO}  \\
\midrule
1D Allen-Cahn           & 2.09   & 5.87   & 1.09   & 1.61   & 2.06   & 2.57 & 2.96 & 3.36 \\
1D Diffusion-Sorption   & 1.64   & 4.95   & 1.52   & 1.48   & 1.98   & 2.11 & 2.87 & 2.52 \\
1D Diffusion-Reaction   & 2.12   & 13.64  & 1.74   & 1.65   & 2.16   & 2.22 & 3.09& 3.24\\
1D Cahn-Hilliard        & 1.85   & 2.71   & 18.73 & 1.65   & 2.18   & 2.27 & 3.28& 2.72\\
2D Shallow Water         & 19.11  & 84.40  & 19.81  & 13.77  & 19.23  & 14.55 & 19.13 & 15.47\\
3D Maxwell               & 71.15 & 180.53 & 67.56  & 58.28  & 34.95  & 63.21 & 75.32 & 100.54 \\
\bottomrule
\end{tabular}
\label{tab:runtime_per_epoch}
\end{table*}


\section{Robustness to Grid Subsampling.}
We evaluate robustness to grid subsampling by testing \textsc{SFO} on uniformly coarsened grids. Table~\ref{tab:sweep_resolutions} reports performance across grid resolutions. For coarser grids (smaller $n$), we enforce $L \le n$ by capping the number of retained modes.

Overall, subsampling generally increases the $L_2$ relative error compared to the original-resolution setting, with the largest degradation on the higher-dimensional PDEs. However, 
Diffusion--Sorption contains sharp boundary-layer effects that are difficult to learn (App.~\ref{app:bound_conv}).
Therefore, as the grid is coarsened (larger $s$), fewer evaluation points fall in the boundary-layer region and high-frequency boundary effects are smoothed, which can reduce the measured $L_2$ error.

\begin{table}[ht!]
\scriptsize
\centering
\caption{$L_2$ relative error (\%) across grid resolutions. In Diffusion-Sorption, we report $10 \times L_2(\%)$  for readability.}
\begin{tabular}{lccccc}
\toprule
\textit{subsampling factor} & $s=2$ & $s=4$ & $s=6$ & $s=8$ & $s=10$ \\
\midrule
1D Allen-Cahn            & 0.057 & 0.054 & 0.051 & 0.061 & 0.074\\
1D Diffusion-Sorption    &1.092 & 1.068  & 0.808 & 0.565 & 0.4406\\
1D Diffusion-Reaction    & 0.236 & 0.292 & 0.288 & 0.282 & 0.360\\
1D Cahn-Hilliard         & 0.084 & 0.068 & 0.096  & 0.075 & 0.105 \\
2D Shallow Water          & 0.601 & 0.601 &   0.932 & 1.552 & 1.555 \\
3D Maxwell                & 0.038 & 0.043 & 0.049 & 0.058 & 0.062 \\
\bottomrule
\end{tabular}
\label{tab:sweep_resolutions}
\end{table}

For zero-shot resolution transfer, we train on a fixed spatial resolution $n$ and evaluate on uniformly subsampled grids of size $n/s$.
\textsc{SFO} keeps all learned parameters fixed and only reconstructs the Hilbert (USB) spectral basis at test time for the target resolution.

\section{Boundary Handling and Circular Convolution}
\label{app:bound_conv}
SFO evaluates the USB-parameterized integral operator via FFT-based convolution, which requires a shift-invariant kernel of the form $\kappa(x,x')=g(x-x')$ (as in FNO~\cite{li2020fourier} and related variants such as UNO~\cite{uno}, F-FNO~\cite{ffno}, and U-FNO~\cite{ufno}).
On a finite discretized grid, FFT corresponds to circular convolution and thus implicitly assumes a periodic extension of the domain.
This design reflects an efficiency--expressivity trade-off: FFT-based evaluation scales favorably with grid resolution, but restricts the kernel class to shift-invariant convolutions.
In contrast, evaluating a general non-stationary kernel $\kappa(x,x')$ explicitly can be more expressive (e.g., location- or input-dependent interactions), but typically incurs substantially higher computational cost (e.g., as in SVD-NO).


\paragraph{Boundary vs.\ interior error diagnostic.}
Because FFT-based evaluation corresponds to \emph{circular} convolution, it can introduce artifacts near the domain boundaries when the underlying PDE solution is not periodic.
We therefore report the standard relative $L_2$ error on (i) the \emph{full} spatial grid, and additionally on (ii) a \emph{boundary band} consisting of the first and last $b$ grid points, and (iii) the \emph{interior} region consisting of the remaining $N-2b$ points.
Concretely, letting $\hat{u},u\in\mathbb{R}^{N\times C}$ denote the predicted and ground-truth solution, we define
\[
\mathrm{Rel}\,L_2(\Omega) (\%) \;=\; \frac{\|\hat{u}-u\|_{2,\Omega}}{\|u\|_{2,\Omega}} \times 100,
\]
where $\Omega$ is either the full index set $\{1,\dots,N\}$, the boundary set $\{1,\dots,b\}\cup\{N-b+1,\dots,N\}$, or the interior set $\{b+1,\dots,N-b\}$.
We set the band width to $b = 0.1N$ (i.e., $10\%$ of the grid points on each side).
For multi-dimensional grids, we define the boundary band analogously as the set of points within a width-$b$ strip from the domain boundary along \emph{any} spatial axis.
Concretely, for an $M$-dimensional grid of size $n^M$ (so $N=n^M$), let $b=\lfloor 0.1n \rfloor$.

\paragraph{Results.}
Table~\ref{tab:boundary_interior} summarizes the results.
We observe a clear separation between datasets whose solution operators are well-approximated by a \emph{translation-invariant} kernel with \emph{geometric decay} (as motivated in App.~\ref{app:theoretical_analysis}) and those that are more sensitive to boundary behavior.
Specifically, for Allen-Cahn, Diffusion-Reaction, and Cahn-Hilliard, the boundary and interior errors are nearly identical, consistent with the fact that these benchmarks admit a strongly shift-invariant, rapidly decaying effective kernel, for which the circular extension induced by FFT evaluation has limited impact.
In contrast, Diffusion-Sorption and Maxwell exhibit a noticeable gap between boundary and interior error, indicating increased boundary sensitivity and/or less homogeneous operator behavior near the domain edges.
Notably, SFO still attains lower overall error than SVD-NO on these PDEs even though SVD-NO does not impose a shift-invariance or circular-convolution constraint; this suggests that a substantial fraction of the error on this benchmark is not solely attributable to the periodic extension induced by FFT evaluation, but rather reflects intrinsic difficulty near the boundaries.

\begin{table}[t]
\centering
\scriptsize
\setlength{\tabcolsep}{7pt}
\renewcommand{\arraystretch}{1.15}
\caption{Boundary vs.\ interior relative $L_2$ error (in \%). We report the standard full-domain relative $L_2$ error, and additionally the error restricted to a boundary band of width $b$ (first and last $b$ grid points) and the remaining interior region. We set $b=0.1N$. In Diffusion-Sorption, we report $10 \times L_2(\%)$  for readability.}
\label{tab:boundary_interior}
\begin{tabular}{lccc}
\toprule
\textbf{Dataset} &   \textbf{Full} & \textbf{Boundary} & \textbf{Interior} \\
\midrule
1D Allen-Cahn  & 0.052& 0.056 & 0.050 \\
1D Diffusion-Sorption  &  1.119 & 1.401 & 0.451 \\
1D Diffusion-Reaction   & 0.242 & 0.242 & 0.242 \\
1D Cahn-Hilliard &  0.072 & 0.081 & 0.069 \\
2D Shallow Water &  0.394 & 0.099& 0.471
\\
3D Maxwell & 0.040 & 0.042 & 0.034 \\
\bottomrule
\end{tabular}
\end{table}

\end{document}